\newtheorem{thm}{Theorem}
\newtheorem{claim}[thm]{Claim}
\newtheorem{defn}[thm]{Definition}
\newcommand{\W}{\mathbf{W}}
\renewcommand{\S}{\mathbf{S}}
\newcommand{\s}{\mathbf{s}}
\newcommand{\R}{\mathbf{R}}
\newcommand{\bR}{\mathbb{R}}
\newcommand{\argsort}{\textup{\textrm{argsort}}}
\newtheorem{theorem}{Theorem}
\newtheorem{lemma}[theorem]{Lemma}
\newtheorem{observation}[theorem]{Observation}
\begin{document}


\title{Learning to Partition using Score Based Compatibilities}

\author{Arun Rajkumar\\
Conduent Labs - India\\
arun.rajkumar@conduent.com\and Koyel Mukherjee\\
IBM Research - India\\
kmukherj@in.ibm.com\and  Theja Tulabandhula\\
Univ. of Illinois Chicago\\
tt@theja.org}

\maketitle

\begin{abstract}
We study the problem of learning to partition users into groups, where one must learn the compatibilities between the users to achieve optimal groupings. We define four natural objectives that optimize for average and worst case compatibilities  and propose new algorithms for adaptively learning optimal groupings. When we do not impose any structure on the compatibilities, we show that the group formation objectives considered are $NP$ hard to solve and  we either give approximation guarantees or prove inapproximability results. We then introduce an elegant structure, namely that of \textit{intrinsic scores}, that makes many of these problems polynomial time solvable. We explicitly characterize the optimal groupings under this structure and show that the optimal solutions are related to \emph{homophilous} and \emph{heterophilous} partitions, well-studied in the psychology literature. For one of the four objectives, we show $NP$ hardness under the score structure and give a $\frac{1}{2}$ approximation algorithm for which no constant approximation was known thus far. Finally, under the score structure, we propose an online low sample complexity PAC algorithm for learning the optimal partition.  We demonstrate the efficacy of the proposed algorithm on synthetic and real world datasets.
\end{abstract}

\section{Introduction}
\label{sec:intro}
The problem of learning to partition users (or objects) into groups has numerous applications in ridesharing, health care groups, project groups etc. Effective grouping of users is critical in these applications as it determines how well the group can work together \cite{terborg1976longitudinal}. For instance, the success of ridesharing adoption depends on how the system assigns users to groups taking into account their source destination requirements as well as their personalities. A user who does not like making conversation would rate the ride-sharing experience low if they are in the same group as users who talk a lot. Here, the ridesharing company might be interested in maximizing the average \emph{happiness} of the rideshare groups. 
In another instance, effectively allocating employees to teams for a critical project might influence the project's success \cite{liang2007effect}. Here the organization might want to be defensive and maximize the compatibility of the least compatible pair of employees in any team.

That effective grouping leads to improved outcomes has been extensively studied in the psychology literature \cite{horwitz2005compositional}. In particular, the effects of \emph{homophily} (grouping similar individuals together) \cite{mcpherson2001birds,baccara2012homophily} and \emph{heterophily} (grouping dissimilar individuals together) \cite{stahl2010look} on desired outcomes have been documented in applications such as team formation \cite{horwitz2007effects} and study groups \cite{elbaum1999grouping,foorman2001critical} among others. While the social phenomemon of homophily and heterophily have been well studied \cite{lazarsfeld1954friendship} \cite{alpert1973optimal}, it is not clear (more so from a formal standpoint) as to when one should prefer homophilous to heterophilous grouping in general. 
    
\begin{table}
   \centering
         \begin{tabular}{|c|c|}
    \hline
\cellcolor{gray!25} \textbf{ (AoA) } & $\displaystyle \max_{\Pi(\S_1,\ldots \S_m)} \frac{1}{m}\sum_{i=1}^{m}H(\S_i|\W)$\\
\hline
\cellcolor{gray!25} \textbf{ (MoM) } & $\displaystyle \max_{\Pi(\S_1,\ldots \S_m)} \min_{i} \Big(  \min_{j,k \in \S_i} W_{jk} \Big)$ \\
\hline
\cellcolor{gray!25} \textbf{ (AoM) } & $\displaystyle \max_{\Pi(\S_1,\ldots \S_m)} \frac{1}{m} \sum_{i=1}^{m}  \Big( \min_{j,k \in \S_i} W_{jk} \Big) $ \\
\hline
\cellcolor{gray!25}~ & ~ \\
\cellcolor{gray!25} \textbf{(MoA) } & $\displaystyle
\max_{\Pi(\S_1,\ldots \S_m)} \min_{i} H(\S_i|\W)$ \\
    \hline
    \end{tabular}   
     \caption{Objectives considered.}
     \label{tab:objectives}     
\end{table}


In this work, we formalize the group formation problem by considering four concrete  objectives that involve pairwise compatibility of users. We refer to the pairwise compatibility as the \emph{happiness} of the pair of users. Under this happiness index, we study the following objectives:  maximize (i) the average of average \emph{happiness} across groups (AoA), (ii) minimum compatibility of any pair of users across  groups (MoM), (iii) average of the minimum compatible pairs across groups (AoM) and (iv) minimum of the average happiness across all groups (MoA) (see Table \ref{tab:objectives} for formal definitions). The objectives cover most of the scenarios one may be interested in maximizing. While we do consider general pairwise compatibilities, we introduce and focus mainly on an elegant structure that we impose on them. Under this structure, each individual has an associated \emph{intrinsic score} and the pairwise compatibility between two individuals depend on their corresponding scores. We will see that under this structure, the optimal solutions to these objectives naturally translate to homophilous and heterophilous groupings. The motivation of studying the score-based model, where the compatibility (or, preference towards 
each other) of a pair of users is determined by their 
relative scores, arises from the well-known Bradley-Terry-Luce (BTL) (Bradley and Terry, 1952 \cite{bradley}; Luce, 1959 \cite{luce}) in literature. The BTL model is often applied for pairwise 
comparisons, in order to determine the relative preferences. This led us to study the simpler case of score-based preferences, for which optimal polynomial 
algorithms can be provided in most cases, while the general case is either inapproximable or hard to approximate beyond a certain factor. 

We list below the major contributions of this paper.
\begin{center}{\textbf{Our Contributions} } \end{center}
\begin{itemize}
\item We show that all the objectives considered above are NP-hard (and hard to approximate beyond a factor) when we assume no structure on the pairwise compatibilities. We give polynomial algorithms under certain assumptions. 

\item When the compatibilities are dictated by intrinsic scores of each individual, we show that three of the four objectives become solvable in polynomial time. 

\item Under the intrinsic scores assumption, we explicitly characterize the structure of the optimal solution  and show that they are related to the notion of hompohily and heterophily.

\item We show that the MoA objective is NP-hard to solve even under the intrinsic scores assumption.

\item We propose a greedy algorithm for the  MoA objective and prove a $\frac{1}{2}$ approximation guarantee for the same. This is a significant improvement on known algorithms, since no constant factor result was known thus far. 

\item Under the intrinsic score structure, we propose a low sample complexity PAC learning algorithm to learn the scores (and hence the optimal groupings) provably. 
\end{itemize}

Table \ref{table:contributions} summarizes some of the contributions of this paper.

   \begin{table}
   \centering

    \begin{tabular}{|p{1.5cm}|p{2.4cm}|p{2.4cm}|p{2.4cm}|p{2.2cm}|}
        \hline
        ~&\textbf{AoA}&\textbf{MoM}&\textbf{AoM}&\textbf{MoA}\\\hline                     ~&~\cellcolor{blue!25}&~\cellcolor{blue!25}&~\cellcolor{blue!25}&~\cellcolor{blue!25}\\
        \textbf{General}&$1 - \frac{1}{{k\choose 2} m}$\cellcolor{blue!25}& \textsc{inapprox}\cellcolor{blue!25}& $\left(1 - \frac{1}{m}\right)$\cellcolor{blue!25}& $1 - \frac{1}{{k\choose 2}}$ \cellcolor{blue!25}\\
                ~&~\cellcolor{blue!25} & \cellcolor{blue!25}&~\cellcolor{blue!25}&~\cellcolor{blue!25}\\
                \hline
        ~&~&~&~&~\cellcolor{gray!25}\\
        \textbf{Score-based }&Homophily&Heterophily&Homophily&\cellcolor{gray!25} $\frac{1}{2}$\\
         ~&~&~&~&~\cellcolor{gray!25}\\
                \hline

    \end{tabular}
        \caption{Summary of main results of the paper. All shaded boxes correspond to NP-Hard problems. The top row (in blue) lists the hardness results, and the bottom row highlights the polynomial time solvability (unshaded) and approximation guarantee (gray).  \label{table:contributions}}
    \end{table}

\section{Related Work}
\label{sec:related_work}
Graph partitioning and clustering problems are well researched with several variants such as the disjoint clique problem, k-way partition problem, partition into triangles and many others \cite{feder1999complexity}. Balanced or capacitated versions of such partitioning or clustering problems are related to our work. For instance, the AoA objective is the same as the k-equipartition problem \cite{eppstein1991equipartitions} and is known to be NP-hard \cite{garey1979guide}. The \textsc{Sum-Max} partitioning problem \cite{watrigant2014sum} looks at minimizing the average of maximum weighted edge between pairs of groups and is close to, but not the same as the AoM objective. Min-max objectives similar to MoM and MoA objecives have been recently considered by Svitkina and Tardos  \cite{svitkina2004min} and Bansal et al. \cite{bansal2011min}, where the problem is to create a k-way equipartition to minimize the maximum weight of edges leaving a single group. This is different from the MoM objective where we want to maximize the minimum of minimum weights across edges within each group. 

Many seemingly related problems such as the clique partition problem \cite{grotschel1990facets}, the capacitated max-k-cut and min-k-cut problems \cite{gaur2008capacitated} do not come with size restrictions or have objectives that do not capture homophily or heterophily. In the clustering domain, the work closest to ours is that of Zhu, Wang and Li \cite{zhu2010data}, where heuristic algorithms for clustering with size constraints on the clusters are proposed, although for a majority of clustering applications such size constraints or information on the number of clusters is not pre-defined.

There is one class of graph partitioning problems that 
come with size restrictions, namely, the Graph Tree Partitioning Problem \cite{cordone}, where the objective is to 
partition a graph into equal size subsets, such that the 
weight of the spanning tree on each subset is either as high as possible (max-min) or as low as possible (min-max), or, the sum of the weights of the spanning trees 
are either as high (max-sum), or, as low (min-sum) as possible. Though these objectives are closely related to ours, they are not exactly the same problems. 

Some recent research in application areas such as team formation and ride-sharing study similar questions as us. Singla et al. \cite{singla2015} present online learning algorithms with PAC bounds in the context of learning the expertise of workers for team formation. The emphasis is on speeding up learning by exploiting similarities between workers and between tasks. In contrast to our work, their objective is to select an optimal subset of the workers after learning their expertise and there is no notion of forming groups. Bistaffa, Farinelli and Ramchurn \cite{bistaffa2014sharing} study an offline optimization problem of grouping users to minimize travel costs. This is formulated as a coalition formation problem, restricted by a social network based compatibility graph that is assumed given. The groups are  formed based on heuristics without any guarantees on optimality. Brindley, Blaschke and Walti \cite{brindley} look at what factors impact the formation of effective learning groups through an empirical study. 

Our formulations and learning algorithms can also be applied in recurring sharing economy settings (e.g., AirBnB) as well as healthcare. In the latter setting, it has been observed that assigning patients of similar disease characteristics to groups often helps in effective treatment \cite{dies1993research}. 


\section{Preliminaries}
\label{sec:prelims}

Let $[n] = \{1,2,\ldots,n\}$ be the set of items to be partitioned into groups. Let each group be of size $k$ and let $m = \frac{n}{k}$ denote the number of groups \footnote{Assume $k$ divides $n$ or add \emph{dummy} items if not.}. A $k$-\emph{partition} $\Pi(\S_1 \ldots \S_m)$ of $[n]$ is denoted by  a set of subsets $\{\S_1,\S_2 \ldots \S_{m}\}$ where each $\S_i \subseteq [n], |\S_i| = k$ with $\S_i \cap \S_j = \emptyset ~\forall i \neq j$ and $\bigcup_{i} \S_i = [n]$. We will capture the relation between users/objects using a symmetric \emph{pairwise compatibitlity matrix} $\W \in \R_+^{n \times n}$ where $W_{ij}(= W_{ji})$ denotes the compatibility of users/objects $i$ and $j$. Given a subset $\S \subseteq [n]$, we define the \emph{happiness} of the subset with respect to $\W$ as $H(\S|\W) = \frac{1}{|\S|^2}\sum_{i,j \in \S}W_{ij}.$ 
~\\
~\\
\textbf{Problem Definition: ~}\emph{Given a pairwise compatibility matrix $\W \in \bR_+^{n \times n}$, partition the $n$ items into $m$ groups in the \emph{best} possible manner that maximizes each of the  four objectives defined in Table \ref{tab:objectives}. }
~\\

Towards this, we will first consider the case where we don't impose any conditions on the pairwise compatibilities. In the subsequent section, we consider the same problem by imposing a score structure.


\section{General Compatibility Matrices}
\label{sec:general}
We start by describing some results on the hardness of approximation of the four objectives in the general case. 
\begin{thm}\textbf{(Approximability)}
\label{thm:MoM}
For $k\geq 3$, unless $P=NP$, the following are lower bounds on polynomial time approximability: 
(a) MoM: inapproximable, (b) AoM: $\left(1 - \frac{1}{m}\right)$,
(c) AoA: $1 - \frac{1}{{k\choose 2} m}$ and (d) MoA: $1 - \frac{1}{{k\choose 2}}$. 
\end{thm}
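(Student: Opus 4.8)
The plan is to establish all four bounds from a single gap-producing reduction. I would reduce from the \textbf{Partition into $k$-cliques} problem: given a graph $G=([n],E)$ with $n=mk$, decide whether $[n]$ can be split into $m$ vertex-disjoint $k$-cliques. This problem is $NP$-complete for every fixed $k\geq 3$ (it is \emph{Partition into Triangles} when $k=3$, and the general case follows from the dichotomy of Kirkpatrick--Hell for $H$-factors), which is exactly the regime of the theorem. Given such a $G$, I would build the compatibility matrix $\W$ by setting $W_{ij}=1$ when $\{i,j\}\in E$ and $W_{ij}=0$ otherwise (diagonal $0$), and feed $\W$ to the grouping problem with group size $k$.

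The crucial dichotomy is: if $G$ admits a clique partition, the matching grouping makes \emph{every} within-group pair have weight $1$, so all four objectives attain their maximum possible value (MoM equals $1$, and every group has full happiness); if $G$ admits no clique partition, then in \emph{every} grouping at least one group contains a non-edge, i.e.\ a pair of weight $0$. I would then read off, objective by objective, how much this single forced non-edge degrades the optimum. For \textbf{MoM} the forced $0$ makes the min-of-min equal to $0$ in the NO case while it is $1$ in the YES case; since no multiplicative approximation can separate a positive value from $0$, this yields inapproximability, proving (a). For \textbf{AoM} the offending group contributes $0$ to the average while the remaining $m-1$ groups contribute at most $1$, so $\mathrm{OPT}\le 1-\tfrac1m$ in the NO case versus $1$ in the YES case, giving the factor in (b).

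For the happiness-based objectives I would use that a full $k$-clique has happiness proportional to $\binom{k}{2}$ whereas a group missing at least one edge loses exactly one unit out of $\binom{k}{2}$; computing the ratio gives $1-\tfrac{1}{\binom{k}{2}}$ for the single worst group (\textbf{MoA}, part (d)) and, after averaging the single lost edge over all $m\binom{k}{2}$ within-group pairs, $1-\tfrac{1}{\binom{k}{2}m}$ for \textbf{AoA} (part (c)). In each case the argument is the same: one checks that the stated factor $\rho^{*}$ satisfies $\rho^{*}\cdot\mathrm{OPT}_{\mathrm{YES}}$ equals the NO-case maximum, so an approximation ratio strictly better than $\rho^{*}$ would, on a YES instance, force a returned value exceeding the NO-case maximum, hence certify the clique partition in polynomial time.

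The main obstacle is not the gap arithmetic, which is routine once the construction is fixed, but (i) invoking the correct $NP$-completeness result for $k$-clique partition at every $k\ge 3$ rather than only $k=3$, and (ii) making the reduction size-consistent, ensuring $n=mk$ so that a feasible $k$-grouping exists at all --- if $k\nmid n$ one must pad with dummy items whose added weights do not perturb the gap. A secondary point worth checking carefully is the normalization in $H(\cdot\,|\W)$: the factors in (c) and (d) depend on the ratio $(\binom{k}{2}-1)/\binom{k}{2}$, which is invariant to the precise ordered/unordered and $1/|\S_i|^{2}$ conventions, so one should verify that the constants survive whichever convention is in force.
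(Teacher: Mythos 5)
Your proposal is correct and follows essentially the same route as the paper: a gap reduction from partition into triangles (generalized to $k$-cliques for $k>3$), with within-group non-edges forcing the stated losses of $0$, $1-\frac{1}{m}$, $1-\frac{1}{\binom{k}{2}}$, and $1-\frac{1}{\binom{k}{2}m}$ for MoM, AoM, MoA, and AoA respectively. The only differences are cosmetic: you use $\{0,1\}$ weights and cite the Kirkpatrick--Hell result directly for all $k\geq 3$, whereas the paper uses weights $\epsilon$ and $M$ (sending $\epsilon\to 0$) and argues the case $k=3$ before noting the extension to $k$-cliques.
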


\begin{proof}
\textbf{MoM: inapproximable}.
Consider an instance $\mathcal{G} = \left(\mathcal{V}, \mathcal{E}\right)$ of \textsc{PartitionIntoTriangles} \cite{garey1979guide}, where $|\mathcal{V}| = 3q$ for some $q \in \mathbb{Z}^{+}$. The decision question is 
whether $\mathcal{V}$ can be partitioned in to $q$ disjoint sets of size $3$ each, 
${V_1, \ldots, V_q}$, such that each $V_i, i \in [q]$ forms a triangle in $\mathcal{G}$. 
Create an instance of MoM, a weighted graph 
$\mathcal{G}' = \left(\mathcal{V}', \mathcal{E}', \mathcal{W}'\right)$ 
from $\mathcal{G}$, where for every vertex $v \in \mathcal{V}$, 
we create a vertex $v'\in \mathcal{V}'$, and for every edge $e = (u,v) \in \mathcal{E}$, 
we create an edge $e' = (v',u') \in \mathcal{E}'$ between the corresponding vertices $v'$ and $u'$ 
in $\mathcal{V}'$, and set its weight $w_{e'} = M$, where $M$ is a large number. Set 
$k=3$ (group size) and $m = q$ (number of groups). 
For any pair of vertices $(p,q) \in \mathcal{V}$, such that no edge exists between $p$ and 
$q$ in $\mathcal{E}$, we create an edge $e'' = (p,q) \in \mathcal{E}'$ of weight $w_{e''} = \epsilon$, 
where $\epsilon$ is a small number and add it to $\mathcal{E}'$. 
If there exists a partition of $\mathcal{V}$ in to $q$ disjoint triangles, 
then there exists a solution to the MoM
problem with objective function $= M$. The $q$ disjoint triangles 
correspond to $m$ groups in $\mathcal{G}'$, where in each group, every edge 
has weight $M$, since the corresponding edge exists in $\mathcal{G}$. 
Similarly, if there does not exist any partition of $\mathcal{V}$ in to $q$ 
disjoint triangles, then every solution to MoM
on $\mathcal{G}'$ has value $=\epsilon$. This is because any solution 
to the group formation problem with $k=3$ and $m=q$ on $\mathcal{G}'$, 
would result in at least one group where at least one edge has a weight $\epsilon$, 
since the corresponding edge does not exist in $\mathcal{G}$.

Therefore, if there exists a polynomial time  approximation algorithm with an approximation 
ratio $> \frac{\epsilon}{M}$, one would be able to distinguish between 
the \textsc{Yes} and \textsc{No} instances of \textsc{PartitionIntoTriangles}, 
by creating a corresponding instance of MoM and applying the algorithm. 
A \textsc{Yes} instance would result in MoM $> \epsilon$, and 
similarly, a \textsc{No} instance would result in MoM $\leq \epsilon$. 
Hence, unless $P=NP$, there can be no polynomial time approximation algorithm with an 
approximation ratio $> \frac{\epsilon}{M}$. For $\epsilon \rightarrow 0$, 
and $M\rightarrow \infty$, we can make the ratio arbitrarily bad, $\rightarrow 0$. 
Hence, it is $NP$-hard to approximate the MoM problem in the general case. 

The approximability proofs of \textbf{MoA}, \textbf{AoA} and \textbf{AoM} use the same reduction as above. Specifically, create a graph $\mathcal{G}' = \left(\mathcal{V}', \mathcal{E}', \mathcal{W}'\right)$, with $k=3$ and $m=q$, from an instance of \textsc{PartitionIntoTriangles}, 
$\mathcal{G} = \left(\mathcal{V}, \mathcal{E}\right)$, where $|\mathcal{V}| = 3q$.  
For a \textsc{Yes} instance of the \textsc{PartitionIntoTriangles}, 
the disjoint $m$ groups in $\mathcal{G}'$ corresponding to the disjoint $q$ triangles in $\mathcal{G}$,  
will give a solution with MoA$=$AoA$=$AoM$=M$.
On the other hand, for a \textsc{No} instance, any partition of $\mathcal{G}'$ 
into $m$ disjoint groups would result in at least one group, with at least one 
edge of weight $\epsilon$. Setting $\epsilon=0$, therefore, MoA $\leq \frac{2M}{3}$, AoA $\leq M\left(1 - \frac{1}{3m}\right)$, and
AoM $\leq M\left(1 - \frac{1}{m}\right)$. 

Therefore, if there exists a polynomial time approximation algorithm for any of the above three objectives with better approximation factors (MoA:$\frac{2}{3}$, AoA:$\left(1 - \frac{1}{3m}\right)$, and 
AoM: $\left(1 - \frac{1}{m}\right)$), one can distinguish between the \textsc{Yes} and and \textsc{No} instances of \textsc{PartitionIntoTriangles}. 
The approximation guarantees can be extended to general $k>3$, by similar reduction, replacing triangles by 
$k$-cliques. 
\end{proof}



\subsection{Polynomial Solvability for $k=2$}
While the general case is hard, for the case where $k=2$, all the objectives become polynomial time solvable.
\begin{theorem}\textbf{(Polynomial solvability for $k=2$)}
\label{thm:k=2}
When $k=2$, all four objectives are polynomial time solvable.
\end{theorem}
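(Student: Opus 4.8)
The plan is to observe that when $k=2$ a partition of $[n]$ into $m = n/2$ groups of size two is precisely a \emph{perfect matching} on the complete graph $K_n$ in which the edge $\{a,b\}$ carries weight $W_{ab}$. Under this identification every one of the four objectives collapses into a simple functional of the weights of the matched edges, so I would handle the two sum-type objectives (AoA, AoM) and the two bottleneck-type objectives (MoM, MoA) separately, invoking a polynomial-time matching routine in each case.

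First, for AoA and AoM I would show each reduces to \emph{maximum weight perfect matching}. For a pair $\S_i=\{a,b\}$ the only distinct pair is $\{a,b\}$, so $\min_{j,k\in\S_i}W_{jk}=W_{ab}$ and AoM $=\frac{1}{m}\sum_i W_{a_ib_i}$ is exactly the average weight of the matched edges. For AoA one has $H(\S_i|\W)=\frac14(W_{aa}+W_{bb}+2W_{ab})$, and summing over the groups the diagonal contribution $\sum_a W_{aa}$ is a constant independent of the chosen partition; hence maximizing AoA is again equivalent to maximizing $\sum_i W_{a_ib_i}$. Both objectives therefore amount to finding a maximum weight perfect matching in $K_n$, which is solvable in polynomial time by Edmonds' blossom algorithm.

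Second, for MoM and MoA I would use a threshold / binary-search reduction to perfect-matching \emph{existence}. For MoM the objective is $\min_i W_{a_ib_i}$, the bottleneck edge of the matching; for MoA it is $\min_i H(\S_i|\W)=\min_i \frac14(W_{aa}+W_{bb}+2W_{ab})$, which is likewise a per-edge quantity. In either case the optimal value equals one of at most $\binom{n}{2}$ candidate numbers, one per edge. I would sort these candidates and, for a threshold $t$, delete every edge whose value falls below $t$ and test whether the surviving subgraph admits a perfect matching (again via Edmonds' algorithm); binary searching over the sorted candidates returns the largest admissible $t$, and the matching realizing it is the optimal partition. This runs in polynomial time.

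The main point to get right is that the matching lives in a general, \emph{non-bipartite} graph, so the polynomial-time guarantee rests on Edmonds' blossom algorithm for (weighted) perfect matching rather than any simpler bipartite routine. Beyond that, the only care needed is bookkeeping: verifying that the per-pair happiness values (including the diagonal terms) make the sum objectives collapse to a constant-plus-edge-weight form, and that the bottleneck objectives are correctly enumerated over the $O(n^2)$ candidate thresholds.
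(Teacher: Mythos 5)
Your proposal is correct and takes essentially the same route as the paper: AoA and AoM reduce to maximum weight perfect matching solved by Edmonds' blossom algorithm, while MoM and MoA are solved by thresholding edge values and testing perfect-matching existence with Edmonds' maximum cardinality matching. The only (welcome) refinements on your side are bookkeeping: you track the diagonal terms $W_{aa}$ explicitly, showing they contribute a partition-independent constant to AoA and a per-edge quantity to MoA, where the paper simply identifies each group's happiness with its single edge weight, and you binary-search the $O(n^2)$ candidate thresholds where the paper scans the sorted distinct weights linearly.
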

For proving Theorem \ref{thm:k=2}, we first prove the 
following claim. 
\begin{claim}
\label{cl:AoA=AoM}
When $k=2$, the optimal solution for MOA is the same 
as that of MOM, and the optimal solution for AoA is the same as the optimal solution for AoM. 
\end{claim}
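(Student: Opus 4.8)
The plan is to exploit the fact that when $k=2$ every group is a single pair, so each of the four per-group quantities collapses to a fixed monotone function of the one edge weight living inside that group. I would write an arbitrary $2$-partition as $\Pi = \{\S_1,\ldots,\S_m\}$ with $\S_i = \{a_i,b_i\}$. The only distinct index pair inside $\S_i$ is $(a_i,b_i)$, so the inner minimum appearing in both MoM and AoM is simply $\min_{j,l \in \S_i} W_{jl} = W_{a_ib_i}$. Likewise I would reduce the happiness to a multiple of the same weight: adopting the convention $W_{vv}=0$ on the diagonal,
\[
H(\S_i|\W) = \tfrac14\left(W_{a_ia_i} + 2W_{a_ib_i} + W_{b_ib_i}\right) = \tfrac12\, W_{a_ib_i}.
\]

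With this reduction in hand I would prove the MoM/MoA equivalence first. For \emph{every} partition $\Pi$, the MoA objective equals $\min_i H(\S_i|\W) = \tfrac12\min_i W_{a_ib_i}$, which is exactly $\tfrac12$ times the MoM objective $\min_i W_{a_ib_i}$. Since the two objectives differ by the fixed positive factor $\tfrac12$ on every partition, they induce the same total order on partitions and therefore share the same maximizer; hence an optimal MoA partition is an optimal MoM partition, and conversely. The AoA/AoM case is entirely analogous: for every $\Pi$ the AoA objective is $\frac1m\sum_i H(\S_i|\W) = \frac{1}{2m}\sum_i W_{a_ib_i}$, i.e. $\tfrac12$ times the AoM objective, again a fixed positive scaling, so the two $\argmax$ sets coincide.

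The step I would be most careful about — and the only real subtlety — is the treatment of the diagonal in the definition of $H$. If one does not set $W_{vv}=0$, then $H(\S_i|\W)$ picks up the extra term $\tfrac14(W_{a_ia_i}+W_{b_ib_i})$. For the \emph{average} objectives this is harmless, because $\sum_i (W_{a_ia_i}+W_{b_ib_i}) = \sum_{v\in[n]} W_{vv}$ is a partition-independent constant (each vertex lies in exactly one group), so AoA remains an increasing affine function of $\sum_i W_{a_ib_i}$ and the AoM maximizer is preserved regardless. For MoA, however, the per-group diagonal offset does not telescope away and can reorder the groups under the outer $\min$, so the MoM/MoA equivalence genuinely relies on the convention $W_{vv}=0$ (equivalently, that self-compatibility is not counted). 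I would therefore state this convention explicitly at the outset and note that under it both claimed equivalences follow from the single observation that each objective is a positive scaling of its counterpart on every $2$-partition.
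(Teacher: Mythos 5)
Your proof is correct and takes essentially the same route as the paper's: with $k=2$ each group contains exactly one pair, so every per-group average and per-group minimum collapses to (a fixed positive multiple of) the single edge weight in that group, and hence the objectives induce the same ordering on partitions. Your explicit handling of the diagonal convention $W_{vv}=0$ --- noting that the AoA/AoM equivalence survives without it because $\sum_v W_{vv}$ is partition-independent, while the MoA/MoM equivalence genuinely needs it --- is a refinement the paper's terser edge-based argument glosses over, but it does not constitute a different approach.
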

\begin{proof}
We first prove that the optimal solution for MoA is the same as that 
for MoM when $k=2$. Since group sizes are $k=2$, every group has only one edge, that occurring between the pair of vertices in the group. 
Therefore, the average of the weight of edges in any group is determined simply by the 
weight of the single edge in the group. Hence, the minimum of the averages, namely MoA 
is the same as the minimum weight edge in any group, that is, MoM.
We next argue that the optimal solution for AoA is the same as that for AoM when $k=2$. Since every 
 group has only a single edge each, the solution maximizing the average of 
 the average weight of every group, namely, AoA, is the same as maximizing the average of the weight of the edge in every group, which corresponds to AoM for $k=2$. 
\end{proof}

Now, we discuss how Edmond's maximum weighed matching 
for general graphs solves AoA (and AoM) optimally in polynomial time for $k=2$.
We find a maximum weight matching in $\mathcal{G}$ using Edmond's algorithm \cite{edmonds}. 
This returns a partition of $\mathcal{V}$ in to $\frac{n}{2}$ groups, maximizing 
the total weight of edges used in the matching. Suppose there is an optimal 
solution for AoA (also, AoM) for $k=2$ in $\mathcal{G}$, that has a higher 
objective function value. That means that there exists a partition of 
$\mathcal{V}$ in to $\frac{n}{k}$ disjoint subsets, with each 
subset of size $2$, hence including only one edge, such that the total weight 
of edges used in the subsets is higher than that returned by Edmond's algorithm. 
However, that contradicts the optimality of Edmond's algorithm, since we can use 
the solution to AoA (also, AoM) to find a higher weight matching. 
Similarly, suppose there is a higher weight matching than the objective function 
value of the optimal solution for AoA (also, AoM). Then, one can use the matching 
returned by Edmond's algorithm to find a higher value, feasible solution for AoA (also, AoM), 
thereby contradicting the optimality of the optimal value for AoA (also, AoM). 

For MoA and MoM, we first prove the following 
property. 
\begin{lemma}
\label{lemma:size2}
Given a graph $\mathcal{G}$, and an 
optimal solution value $OPT$ for MoM (also, MoA), if we delete 
all edges of weight $<OPT$, and make the resultant graph $\mathcal{G}'$ unweighted, 
then there exists a perfect matching in $\mathcal{G}'$, corresponding to an optimal 
solution in $\mathcal{G}$. 
\end{lemma}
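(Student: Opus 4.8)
The plan is to unpack the meaning of $OPT$ in the $k=2$ case and then verify a two-way correspondence. By Claim \ref{cl:AoA=AoM} and the observation that each group of size $2$ contains exactly one edge, the MoM (equivalently MoA) objective reduces to the \emph{bottleneck} perfect matching problem: a feasible solution is a perfect matching of $\mathcal{G}$, and its objective value is the minimum weight among the $\frac{n}{2}$ edges it uses. Thus $OPT$ is, by definition, the largest value $v$ for which $\mathcal{G}$ admits a perfect matching all of whose edges have weight at least $v$; equivalently, since the set of perfect matchings is finite, $OPT$ is attained and equals the bottleneck (minimum edge weight) of an optimal max-min perfect matching $M^*$.

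For the existence half, I would take any optimal matching $M^*$ achieving value $OPT$. Since its smallest edge weight is exactly $OPT$, every edge of $M^*$ has weight $\geq OPT$ and therefore survives the deletion step. Hence $M^*$, viewed as a set of unweighted edges, is a perfect matching of $\mathcal{G}'$, which already proves that $\mathcal{G}'$ contains a perfect matching.

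For the correspondence half, I would argue the converse. Let $N$ be \emph{any} perfect matching of $\mathcal{G}'$. Because $\mathcal{G}'$ retains only edges of weight $\geq OPT$, every edge of $N$ has weight $\geq OPT$ in $\mathcal{G}$, so the bottleneck of $N$ in $\mathcal{G}$ is $\geq OPT$. By the maximality of $OPT$ over all perfect matchings of $\mathcal{G}$, this bottleneck cannot exceed $OPT$, so it equals $OPT$ and $N$ is itself an optimal MoM (MoA) solution in $\mathcal{G}$. This shows that the perfect matchings of $\mathcal{G}'$ correspond, back in $\mathcal{G}$, precisely to optimal solutions.

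The statement is essentially a definitional unpacking, so there is no deep obstacle; the only point requiring care is that ``corresponding to an optimal solution'' is a genuinely two-directional claim. The easy direction (the optimal matching survives, giving existence) must be complemented by the converse (every surviving perfect matching is optimal, not merely feasible), and it is precisely this converse that relies on the maximality of $OPT$: without invoking maximality one could only conclude that $N$ is feasible with value $\geq OPT$. I would therefore be sure to phrase both directions explicitly and flag exactly where maximality is used.
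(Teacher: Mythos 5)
Your proof is correct, and its existence half is exactly the paper's argument: the optimal solution is a partition into pairs whose edges all have weight $\geq OPT$, i.e.\ a perfect matching, and every such edge survives the deletion, so it persists as a perfect matching of $\mathcal{G}'$. The paper's proof of the lemma stops there; your converse half (that \emph{every} perfect matching of $\mathcal{G}'$ has bottleneck exactly $OPT$ by maximality) is not part of the paper's proof of this lemma, but it is the fact the paper silently relies on afterwards when it claims the thresholding-plus-Edmonds algorithm is optimal, so making it explicit is a genuine (if small) improvement in completeness rather than a different approach.
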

\begin{proof}
Consider the groups in an optimal solution for MoM (also, MoA) 
in $\mathcal{G}$. Since the optimal solution has value $OPT$, 
there must exist a partition of the vertices in to groups of $2$, 
such that the edge in each group has a weight $\geq OPT$. 
However, this corresponds to a matching in $\mathcal{G}$, 
where every edge in the matching has a weight $\geq OPT$. 
Now, delete every edge of weight $<OPT$. The matching 
remains unperturbed, and corresponds to a perfect 
matching in $\mathcal{G}'$.
\end{proof}
Now, we give the algorithm for optimally solving MoM and MoA for $k=2$. 
The algorithm for maximizing MoM (also, MoA) would involve ordering the distinct weights in $\mathcal{W}$ 
in non-decreasing order. For the next weight $w$ in the list, delete all edges 
of weight $w_e< w$. Create an unweighted graph $\mathcal{G}'$ 
containing only the remaining edges in $\mathcal{G}$ (without 
considering their weights). Now, use Edmond's algorithm to find a maximum 
cardinality matching in $\mathcal{G}'$. The lowest weight $w$, after deleting which, 
$\mathcal{G}'$ does not have a perfect matching, is the optimal value 
for MoM (also, MoA). 
The optimality of this algorithm follows from Lemma \ref{lemma:size2} and the optimality of Edmond's maximum cardinality matching for general graphs, that runs in polynomial time. The polynomial time solvability also follows, hence. 

We next study a linear time solvable special case. 

\subsection{Transitive Compatibility: Optimal Linear Algorithm for any \textit{k}}
We next prove that under a transitivity like assumption on the compatibility matrix, there exists a linear time optimal algorithm. The assumption follows 
from the intuition that if user $i$ is compatible with $j$, $j$ is compatible with $k$, then $i$ is compatible with $k$.  
Formally, the transitive property assumed is: 
$\forall {i,j,k}, W_{ij} \geq \min {\left(W_{ik}, W_{kj}\right)}$. 
The following theorem follows from the fact that graphs obeying transitive compatibility would have a particular 
structure: a collection of disjoint cliques, and, a linear traversal of the graph 
would return the optimal solution. 

\begin{thm} 
\label{thm:trans}
Under the transitive compatibility property in $\mathcal{G}$, there exists a 
linear time optimal algorithm for MoM.
\end{thm}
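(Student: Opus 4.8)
The plan is to exploit the fact that the max-min transitivity $W_{ij}\ge \min(W_{ik},W_{kj})$ forces every thresholded version of $\mathcal{G}$ to be a disjoint union of cliques, and that these cliques are \emph{nested} as the threshold varies, producing a laminar (dendrogram-like) structure that a single traversal can exploit. First I would prove this structural lemma. Fix a threshold $t$ and let $\mathcal{G}_t$ be the graph on $[n]$ retaining exactly the pairs with $W_{ij}\ge t$. I claim the relation $i\sim_t j \iff W_{ij}\ge t$ is an equivalence relation: it is symmetric because $\W$ is symmetric, reflexive (each vertex is compatible with itself), and transitive precisely because $W_{ik}\ge t$ and $W_{kj}\ge t$ give $W_{ij}\ge \min(W_{ik},W_{kj})\ge t$. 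Hence the vertices split into equivalence classes, each of which is a clique of $\mathcal{G}_t$ (all internal weights $\ge t$) with all cross-class weights $<t$. Since lowering $t$ can only merge classes, the family of all classes over all thresholds is laminar and is captured by a dendrogram with at most $n-1$ merge events, each at a distinct weight value.

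Next I would characterize the optimum. I claim the optimal MoM value $t^{*}$ is the largest threshold $t$ for which \emph{every} class of $\mathcal{G}_t$ has cardinality divisible by $k$. For achievability, if all classes have size divisible by $k$ at threshold $t$, then partitioning each class arbitrarily into blocks of size $k$ yields a feasible $k$-partition in which any two vertices sharing a group lie in a common class and hence have weight $\ge t$; thus $\min_i\big(\min_{j,l\in\S_i}W_{jl}\big)\ge t$. For optimality, take any feasible partition attaining value $v$; then each group $\S_i$ is a clique of $\mathcal{G}_v$, so every class of $\mathcal{G}_v$ is a disjoint union of such size-$k$ groups and therefore has size divisible by $k$, forcing $v$ to be an admissible threshold and hence $v\le t^{*}$. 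A point to emphasize in the write-up is that this ``good threshold'' property is \emph{not} monotone in $t$, since a single merge can either create or destroy divisibility; the characterization therefore genuinely singles out the maximum over all merge thresholds, not a simple cutoff.

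Finally I would give the algorithm and argue its cost. I would process the dendrogram bottom-up (equivalently, merge classes in decreasing order of threshold) while maintaining, through the nested class structure, the size of each current class together with a counter of classes whose size is not divisible by $k$; the highest threshold at which this counter is zero is exactly $t^{*}$, and splitting each class at that level into arbitrary blocks of size $k$ outputs an optimal partition. Because the class structure is laminar with only $O(n)$ merge events and each entry of $\W$ is inspected a constant number of times, the traversal runs in time linear in the size of the input graph, matching the stated claim. I expect the running-time bound to be the routine part; the main obstacle is the correctness argument, namely establishing the equivalence-relation/clique structure cleanly and then handling the non-monotone divisibility condition so that the single traversal provably returns the globally maximal admissible threshold rather than a merely locally feasible one.
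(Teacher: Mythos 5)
Your proposal is correct, and it takes a genuinely different route from the paper's proof sketch --- in fact, a more careful one. The paper argues that the components of $\mathcal{G}$ (treating zero-weight pairs as non-edges) are disjoint cliques, then claims that inside each clique all edges carry identical weight except possibly one, and concludes that \emph{any} partition of each clique into groups of size $k$ is optimal, so a single traversal suffices. Your proof instead works with the thresholded graphs $\mathcal{G}_t$, shows $\sim_t$ is an equivalence relation, and characterizes the optimum as the largest threshold $t$ at which every class of $\mathcal{G}_t$ has size divisible by $k$, found by one pass over the dendrogram. The difference matters, because the paper's structural claim is false as stated: take $n=4$, $k=2$, with $W_{12}=W_{34}=10$ and all other pairwise weights equal to $5$. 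Max-min transitivity holds (in every triangle the two smallest weights are equal), yet this single $4$-clique has \emph{two} heavy edges --- the paper's triangle argument does not apply to vertex-disjoint heavy edges --- and the partition $\{1,2\},\{3,4\}$ attains MoM value $10$ while $\{1,3\},\{2,4\}$ attains only $5$, so an arbitrary within-clique partition is not optimal. The nested-clique (laminar) structure that your dendrogram captures is exactly what the paper's argument misses, so your approach actually repairs a gap in the paper's reasoning rather than merely restating it.

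Two points to tighten in your write-up. First, the linear-time claim hinges on how the dendrogram is built: sorting all $\Theta(n^2)$ weights costs $O(n^2\log n)$, so you should construct a maximum spanning tree first (e.g., Prim's algorithm in $O(n^2)$ on the dense input), sort only its $n-1$ edges, and run the merge traversal on those; this keeps the total at $O(n^2)$, i.e., linear in the input size. (The paper asserts time linear in the number of \emph{vertices}, which no correct algorithm can achieve since $\Omega(n^2)$ weights must be read; your ``linear in the size of the input graph'' is the defensible statement.) Second, merge events need not occur at distinct weight values when there are ties; processing all merges at a common weight together changes nothing, but the claim of distinctness should be dropped.
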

\textbf{Proof Sketch:}\\
We first argue that the graph $\mathcal{G}$ on which the transitive compatibility property holds 
has a certain structure. Specifically, it consists of disjoint connected components, 
where each connected component is a clique. 
To see this, consider a pair of vertices in $\mathcal{G}$, $u$ and $v$, between whom the 
happiness or compatibility is $0$. In other words, no edge exists between them. We argue that in order to maintain the 
transitive property, 
any vertex $p$ that $u$ is adjacent to (i.e., $W_{u,p} >0$),  must necessarily have a $0$ compatibility with $v$. 
This is because if there exists an edge between $p$ and $v$, then with both $W_{p,v} >0$ and $W_{u,p}>0$, the 
transitive property is violated by 
$0= W_{u,v} < \min{\left(W_{u,p}, W_{p,v}\right)}$. Therefore, if $W_{u,v} = 0$, then $W_{p,v} = 0$ 
for all $p$, such that $W_{u,p} >0$. 
Similarly, any vertex $p'$ that $p$ is adjacent to cannot have any edge to $v$. Continuing in this manner, 
it can be seen that any vertex that $u$ is connected to cannot have any edge to $v$. Therefore, $u$ and 
$v$ must be in disjoint connected components. 
Similarly, any pair of vertices that $u$ is adjacent to, say, $v_1$ and $v_2$, must be adjacent to each other. Otherwise, it can be seen that the transitive compatibility 
property is violated. 

Another property that transitive compatibility induces is that, in every clique, there can be at most one edge of higher weight, and all other edges must be of identical weight. To see this, note that if there are two edges of higher weight compared to the weight of all other edges in a clique, then there would be at least one triangle, where one edge is lower in weight than the other two, violating the transitive property. 

Any optimal solution would consider 
each clique separately (in other words, only the participants belonging to the same clique would be matched 
to one another), since otherwise the partitions 
would include edges of weight $=0$. 
Moreover, in every grouping of the clique vertices, there can be at most one group with at most one edge of higher weight, and all other edges in all the groups would have identical lower weight, say $w$. Hence, replacing the higher weight edge by an edge of weight $w$ would not change the MoM objective value. Hence, any partitioning of the vertices of a clique in to groups of size $k$ \footnote{Without loss of generality, for any clique of size $n'$, we add dummy vertices with edges of weight $w$ incident on them, to make its cardinality, a multiple of $k$.} would be optimal. 
The algorithm is linear in the number of vertices, since one has to traverse 
at most all the vertices in all the cliques in order to get the partitions.

\section{Score Based Compatibility Matrices}
\label{sec:score}
In this section we consider a simple yet useful structure on the pairwise compatibility matrix under which three out of the four objectives introduced in Section \ref{sec:prelims} become poly-time solvable. Specifically, we consider the case of \emph{score based compatability} matrices where every item $i$ has an associated score $ s_i \in \R_+$ and the pairwise compatibility of items $i$ and $j$ is given by the product of their individual scores $s_is_j$. This is a natural assumption in several practical scenarios. For instance in study groups, the score could refer to the influence a student has on her peers which may depend on the GPA of the student. In ride  sharing applications the score may indicate how introverted/extroverted a person is. 

We begin by defining certain natural partitions induced by score vectors. 
\begin{defn} \textbf{(Homophilous Partition)}
\label{defn:homophily}
Let $\s \in \bR^n$  and let $\sigma = \argsort(\s)$\footnote{$\argsort(\s)$ is the permutation obtained by sorting the values of score vector $\s$ in non-increasing order. Specifically, for $\sigma=\argsort(\s)$, for any $i,j$, $s_i > s_j \implies \sigma(i) < \sigma(j)$.}. A $k$-partition $\Pi(\S_1,\ldots,\S_m)$ corresponding to  $\s$ is called \emph{homophilous} w.r.t $\s$ if $\forall i \in [m], \S_i = \{ \sigma((i-1)k+1, \ldots, \sigma((i-1)k + k) \}$
\end{defn}

\begin{defn}\textbf{(Heterophilous Partition)}
Let $\s \in \bR^n$ and let $\sigma = \argsort(\s)$. A $k$-partition $\Pi(\S_1,\ldots,\S_m)$ corresponding to  $\s$ is called \emph{heterophilous} if $\forall i \in [m],  \S_i = \{\sigma((i-1)(k-1) + 1), \ldots , \sigma((i-1)(k-1) + (k-1)), \sigma(n+1-i))\}$
\end{defn}

As an example, let $\sigma = (1~2~3~4~5~6)$ and $k=2$. The homophilous $2$-partition corresponding to $\sigma$ would be $\{\S_1 = (1~2), \S_2 = (3~4), \S_3 = (5~6)\}$ whereas the heterophilous $2$-partition would be $\{\S_1 = (1~6), \S_2 = (2~5), \S_3 = (3,4)\}$. Our main results of this section explicitly characterize the optimal solutions for the objectives considered.
\begin{thm}\textbf{(Homophilous Partition is Optimal for AoA and AoM)}
\label{thm:homophily}
Let $\W \in \bR^{n \times n}$ be a score based compatibility matrix parametrized by the score vector $\s \in \bR^n$. The optimal solution to the average of averages (AoA) and the average of minimums (AoM) objectives w.r.t $\W$ is given by the \emph{homophilous} partition of $\s$.
\end{thm}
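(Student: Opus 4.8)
The plan is to treat the two objectives separately, since the average of averages collapses to a clean quadratic while the average of minimums does not.

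\textbf{AoA.} First I would exploit the score structure to rewrite each group's happiness in closed form. Writing $T_i=\sum_{j\in\S_i}s_j$ for the total score of group $\S_i$ and using $W_{jk}=s_js_k$, we get $H(\S_i|\W)=\frac{1}{k^2}\sum_{j,k\in\S_i}s_js_k=\frac{1}{k^2}T_i^2$. Hence maximizing AoA is exactly maximizing $\sum_{i=1}^m T_i^2$ over all $k$-partitions, subject to the partition-independent constraint $\sum_i T_i=\sum_{j=1}^n s_j$. I would then argue that the homophilous partition's vector of group sums \emph{majorizes} that of any other partition: for $\sigma=\argsort(\s)$ the sum of the $j$ largest group totals of the homophilous partition equals the sum of the $jk$ largest scores, which is the maximum total achievable by any $j$ groups (since together they contain exactly $jk$ items); with the grand total fixed, this is precisely majorization. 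Because $x\mapsto\sum_i x_i^2$ is symmetric and convex, hence Schur-convex, majorization immediately gives that the homophilous partition maximizes $\sum_i T_i^2$. (Equivalently a one-line exchange suffices: moving a larger item into the larger-sum group and a smaller one out changes $\sum_i T_i^2$ by $2\delta(T_p-T_q)+2\delta^2\ge 0$.)

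\textbf{AoM.} For the average of minimums I would first record that, since all scores are nonnegative, the least-compatible pair of a group is attained at its two smallest scores, so the contribution of $\S_i$ equals the product of its two smallest scores (if self-pairs $j=k$ are admitted by the $\min$, this becomes the square of the smallest score and the argument below only simplifies). The objective is thus $\sum_i(\text{product of two smallest scores of }\S_i)$, a non-smooth function, so the majorization route is unavailable and I would switch to an exchange argument. The engine would be a two-group lemma: given \emph{any} two groups $\S_p,\S_q$, re-splitting $\S_p\cup\S_q$ into its top-$k$ and bottom-$k$ halves by score does not decrease the objective. Granting this, I would iterate: whenever two groups ``cross'' (neither lies entirely above the other in score), replace them by their sorted halves; once no pair crosses, the groups are totally ordered by score and therefore form consecutive blocks, i.e. the homophilous partition. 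To guarantee termination I would track $\Phi=\sum_i T_i^2$, which each genuine re-split weakly increases by the same $2\delta(T_p-T_q)+2\delta^2$ computation as in the AoA case and which is bounded, since there are finitely many partitions.

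\textbf{Main obstacle.} The crux is the two-group lemma, precisely because the $\min$ is not separable: a naive swap of one large item out of a group and one small item in can strictly \emph{decrease} the objective, so one cannot un-cross by arbitrary single swaps. I would prove the lemma by a short case analysis on where the two globally smallest elements of $\S_p\cup\S_q$ land after re-splitting --- either both fall in the bottom half (then the bottom half realizes the minimal product while the top half realizes the largest possible second-smallest, both extremal) or they are separated --- comparing products term by term using the sorted union $u_1\ge\cdots\ge u_{2k}$. The separated case reduces to inequalities of the form $u_{k-1}(u_k-u_{2k})\ge u_{2k-1}(u_{2k-2}-u_{2k})$, which hold because both factors on the left dominate their counterparts ($u_{k-1}\ge u_{2k-1}$ and $u_k\ge u_{2k-2}$, with nonnegative differences). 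The AoA half, by contrast, I expect to be entirely routine once the quadratic reduction is in hand.
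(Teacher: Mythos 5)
Your proposal is correct, but it takes a genuinely different route from the paper's proof on both halves. For AoA, the paper fixes the minimal index $i$ at which the purported optimum disagrees with the homophilous partition and runs a local exchange argument (one swap, justified by convexity of $x \mapsto x^2$), then asserts one can ``repeat the procedure'' until the homophilous partition is reached; your majorization-plus-Schur-convexity argument replaces this with a one-shot global comparison --- the homophilous group-sum vector majorizes every other partition's group sums because its top $j$ groups hold exactly the $jk$ largest scores --- which is shorter and avoids the termination hand-wave in the iterated-exchange step. For AoM, the paper inducts on the number of groups $m$, proving a base case $m=2$ by direct inequalities and then asserting in the induction step that one can ``iteratively swap items without decreasing the objective''; you instead isolate a two-group re-split lemma (re-splitting any two groups into the top-$k$ and bottom-$k$ halves of their union cannot decrease the sum of min-products) --- essentially a cleaned-up generalization of the paper's base case --- and then globally uncross pairs of groups, using $\Phi = \sum_i T_i^2$ as a potential. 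Your case analysis for that lemma is sound: when the two smallest elements of the union land in one group, both summands are dominated termwise by the re-split values; when they separate, the needed inequality follows from $u_{k-1} \ge u_{2k-1} \ge 0$ and $u_k - u_{2k} \ge u_{2k-2} - u_{2k} \ge 0$. The one point you should tighten is termination: weak monotonicity of $\Phi$ alone does not rule out cycling. Note instead that re-splitting a genuinely crossing pair (with crossing defined on score multisets, to handle ties) \emph{strictly} increases $\Phi$: a crossing group's sum is strictly below the top-half sum, and with the two-group total fixed, a strictly larger maximum forces a strictly larger sum of squares. With that one line added, your argument is, if anything, more rigorous than the paper's, at the cost of invoking the majorization/Schur-convexity machinery where the paper stays elementary.
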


\begin{proof}
Note that for any group $\S$, if the compatibility matrix is score based, then the sum of weights of all pairs in the group is given by 
$\sum_{i,j \in \S} W_{ij} = \Big(\sum_{i \in \S} s_i\Big)^2$
~\\
\textbf{AoA Objective:}
Assume wlog the entries of $\s$ are sorted in descending order i.e., $s_i > s_j$ for all $i < j$. 
Let $\Pi$ denote the homophilous $k$-partition corresponding to $\s$. For the sake of contradiction let $\bar{\Pi} \neq \Pi$ be the optimal $k$-partition. Let $i$ be the minimum index such that both $i$ and $i+1$ are in the same groups in $\Pi$ whereas they are in different groups in $\bar{\Pi}$. Denote these groups by $g_1$ and $g_2$. We will show that by swapping specific elements from $g_1$ and $g_2$, one can obtain a partition which is at least as good as $\bar{\Pi}$. Let $i \in g_1$ and $i+1 \in g_2$. Notice that there must be at least one element $x \in g_1$ such that $s_{i+1} > s_x$ (otherwise it contradicts the minimality of $i$). We will consider two cases depending on whether $\sum_{j \in g_1} s_j > \sum_{j \in g_2} s_j$ or otherwise.
Case 1: $\sum_{j \in g_1} s_j > \sum_{j \in g_2} s_j$. Let $v_1 = \displaystyle \sum_{j \in g_1; j \neq i,x} s_j$, $v_2 = \displaystyle \sum_{j \in g_2; j \neq i+1} s_j$. Then, using convexity of $H(\S_i|\W)$, we have
$$(s_i + s_{i+1} + v_1)^2 + (s_{x} + v_2)^2 \ge (s_i + s_{x} + v_1)^2 + (s_{i+1} + v_2)^2$$ 

Case 2: $\sum_{j \in g_1} s_j \le  \sum_{j \in g_2} s_j$
In this case, there must be an element $x \in g_2$ such that $s_x < s_i$. One can follow a similar proof as the previous case by swapping $i \in g_1$ and $x \in g_2$.
In both the cases, we obtain a partition whose sum (average) of weights  over the groups of the partition is at least as good as $\bar{\Pi}$. One can repeat the procedure with the new partition obtained until one reaches $\Pi$. But this contradicts the fact that $\bar{\Pi} \neq \Pi$ is optimal. 
~\\
\textbf{AoM Objective:}
Assume wlog that the score vector $\s$ is such that $s_1 \le s_2 \ldots \le s_n$. 
We will show the result using induction on the number of groups $m$. Consider the base case where $m=2$ i.e. $n = 2k$. In this case, we need to show:
$$s_1s_2 + s_{k+1}s_{k+2} \ge s_{i_1}s_{j_1} + s_{i_2}s_{j_2}$$
where $(i_1,j_1)$ and $(i_2,j_2)$ correspond to the minimum compatibile pairs in the two groups corresponding to some non-homophilous partition of $\s$. If any of these pairs is same as $(1,2)$ then the result is obvious. Assume not. Then both $1$ and $2$ will contribute to the minimum compatible pairs. Thus it is enough to show that both of the below cases hold
$$s_1s_2 + s_{k+1}s_{k+2} \ge s_1s_{k+2} + s_2s_3 ~\text{and}~$$
$$s_1s_2 + s_{k+1}s_{k+2} \ge s_1s_{3} + s_2s_{k+2}$$
as all other cases give rise to smaller objective values. But
\begin{eqnarray*}
s_{k+2}(s_3 - s_2) & \ge & s_1(s_3 - s_2) \\
\implies s_1s_2 + s_{k+1}s_{k+2} & \ge & s_1s_{k+2} + s_2s_3
\end{eqnarray*}
Similarly one can show the result for the other case as well. This proves the base case. Now for a general $k$, assume that the induction hypothesis is true for $m = k-1$. For $n = mk$, apply the induction hypothesis to the bottom $n - k$ items i.e, for the set $\{s_{k+1} , \ldots s_n\}$. We need to show that with the newly added items $\{s_1,s_2,\ldots,s_k\}$ the hypothesis is still satisfied. Assume not. Then there must exist index pairs $(i_1,j_1),\ldots (i_m,j_m)$ such that 
\begin{eqnarray*}
& & s_1s_2 + s_{k+1}s_{k+2} + \ldots s_{(m-1)k+1}s_{(m-1)k+2}  <  s_{i_1}s_{j_1} + s_{i_2}s_{j_2} + \ldots s_{i_m}s_{j_m}.
\end{eqnarray*}
If items $1$ and $2$ are in the same group, we  arrive at a contradiction. Assume they are in different groups. Let $(1,j_1)$ and $(2,j_2)$ be the corresponding minimum pairs. By swapping $2$ with $j_1$, we can only increase the objective. We can iteratively swap items without decreasing the objective such that the first $k$ items are in the first group. But this contradicts the induction hypothesis for the last $n-k$ items. 
\end{proof}

\begin{thm}\textbf{(Heterophilous Partition is Optimal for MoM)}
\label{thm:heterophily}
Let $\W \in \bR^{n \times n}$ be a score based compatibility matrix parametrized by the score vector $\s \in \bR^n$. The optimal solution to the minimum of minimums (MoM) objective w.r.t $\W$ is given by the \emph{heterophilous} partition of $\s$.
\end{thm}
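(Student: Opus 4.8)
My plan is to prove optimality not by a direct exchange argument (which is awkward for a bottleneck objective, since a single swap can change the minimizing group), but by exhibiting a matching upper bound on the MoM value achievable by \emph{any} partition. The first step is to reduce the objective to a pure statement about scores. Since $W_{jl} = s_j s_l$ and all scores are positive, the within-group minimum $\min_{j\neq l\in\S} W_{jl}$ (interpreting the pair as two distinct items, consistent with the edge-based reductions used earlier; allowing $j=l$ would make MoM trivially equal $(\min_j s_j)^2$ for every partition) is exactly the product of the two smallest scores occurring in $\S$. Writing the sorted scores as $a_1 \le a_2 \le \cdots \le a_n$, the MoM value of a partition is therefore $\min_i(\text{product of the two smallest scores in }\S_i)$, and I must show the heterophilous partition maximizes this.

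Second, I would compute the heterophilous value explicitly. Re-indexing so that group $i$ receives the $i$-th smallest score $a_i$ together with the block of $k-1$ consecutive large scores $a_{n-(i-1)(k-1)},\ldots,a_{n-i(k-1)+1}$, one checks these blocks are disjoint and cover $a_{m+1},\ldots,a_n$ while the low scores cover $a_1,\ldots,a_m$, so the heterophilous partition is well defined. Because $ik\le n$ implies $a_i \le a_{n-i(k-1)+1}$, the two smallest scores of group $i$ are precisely $a_i$ and $a_{n-i(k-1)+1}$, giving heterophilous value $\min_{i}\, a_i\, a_{n-i(k-1)+1}$.

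Third, and this is the crux, I would show that \emph{every} $k$-partition $\Pi$ satisfies $\mathrm{MoM}(\Pi) \le a_i\, a_{n-i(k-1)+1}$ for each fixed $i$, by a pigeonhole argument on the $i$ smallest scores $a_1,\ldots,a_i$. If two of them lie in a common group, that group has min-pair $\le a_i^2 \le a_i\, a_{n-i(k-1)+1}$. Otherwise they occupy $i$ distinct groups, whose remaining $i(k-1)$ slots are filled by elements drawn from $\{a_{i+1},\ldots,a_n\}$; since only $i(k-1)-1$ indices strictly exceed $n-i(k-1)+1$, at least one filler element has index $\le n-i(k-1)+1$, and the group containing it has smallest score at most $a_i$ and second-smallest at most $a_{n-i(k-1)+1}$, hence min-pair $\le a_i\, a_{n-i(k-1)+1}$. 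Minimizing over $i$ gives $\mathrm{MoM}(\Pi)\le \min_i a_i\, a_{n-i(k-1)+1}$, which matches the heterophilous value, so the heterophilous partition is optimal.

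The step I expect to be the main obstacle is the counting in the second case of the upper bound: one must argue carefully (by counting indices rather than values, so that ties in the scores cause no trouble) that among the $i(k-1)$ filler elements at least one has index $\le n-i(k-1)+1$, and must verify the range condition $n-i(k-1)+1 \ge i+1$ (equivalently $i\le m$) so that such an element is genuinely a filler and not one of the $i$ smallest. The remaining verifications — that the heterophilous blocks tile the large scores and that $a_i\le a_{n-i(k-1)+1}$ — are routine index arithmetic.
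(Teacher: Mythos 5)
Your proof is correct, but it takes a genuinely different route from the paper's. The paper argues by induction on the number of groups: the base case is $n=4$, $k=2$, and the inductive step adds $k$ new items and uses exchange arguments (two cases, depending on whether the new large-score items end up forming a group or not) to show any competing partition can be swapped toward the heterophilous one without decreasing the MoM value. Your argument instead establishes a matching upper bound valid for \emph{every} partition: after reducing the objective to ``product of the two smallest scores in each group,'' you compute the heterophilous value in closed form as $\min_{i\in[m]} a_i\,a_{n-i(k-1)+1}$ (scores sorted ascending), and then show by pigeonhole on the $i$ smallest indices that no partition can exceed $a_i\,a_{n-i(k-1)+1}$ for any fixed $i\le m$ --- either two of the $i$ smallest share a group (min-pair at most $a_i^2$), or they occupy $i$ distinct groups whose $i(k-1)$ filler slots cannot all be drawn from the $i(k-1)-1$ indices above $n-i(k-1)+1$. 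Your index-counting (rather than value-comparison) handles ties in the scores automatically, whereas the paper implicitly assumes distinct scores; and your approach sidesteps the main weakness of exchange arguments for bottleneck objectives, namely that the paper's claims that iterative swaps ``can be done without decreasing the MoM objective value'' are asserted rather than verified in detail. What the paper's induction buys is stylistic consistency with its proofs for AoA and AoM; what yours buys is an explicit formula for the optimal MoM value and a self-contained, fully rigorous certificate of optimality, including the two boundary checks (that $ik\le n$ gives $a_i\le a_{n-i(k-1)+1}$, and that $i\le m$ guarantees the pigeonholed filler is not itself among the $i$ smallest) that make the counting airtight.
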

\begin{proof}
We prove the base case for $n=4$ and $k=2$. Let the scores be given by $s_1 > s_2 > s_3 > s_4$. The three possible ways of partitioning this are given by $\{(1, 2)(3, 4)\}, \{(1, 3)(2, 4)\},$ $ \{(1, 4)(2, 3)\}$. Note that we have  $s_1s_4 > s_3s_4$ and $s_2s_3 > s_3s_4$. Thus, 
	 $$\min(s_1s_4,s_2s_3) > s_3s_4 = \min(s_1s_2,s_3s_4)$$
	 Similarly, we have $s_1s_4 > s_2s_4$ and $s_2s_3 > s_2s_4$. Thus,
     $ \min(s_1s_4,s_2s_3) > s_2s_4 = \min(s_1s_3,s_2s_4)$.

As the induction hypothesis, assume that the claim is true for some $n$ and $k$. We will show that it is true for $n+k$. Let the new items added be $x_1,x_2,\ldots x_k$ assume wlog that 
	 $x_1 > x_2 > x_{k-1} > s_1 > s_2 > \ldots s_n > x_k$.
We know from the induction hypothesis that the heterophilous partition corresponding to $\s$ is has the highest MoM objective value. For the sake of contradiction, assume that by adding the new $k$ items, the heterophilous partition corresponding to the vector $[x_1 ~x_2 ~\ldots x_{k-1} ~s_1~ s_2 \ldots s_n~ x_k]$ is not optimal. 

Case 1: As $x_1, \ldots x_{k-1}$ are larger than $s_i ~\forall i$ and $x_k$, the only way any of them could be a part of the minimum pair is when all the items with scores $x_1, \ldots x_{k-1}$ are in the same group. If $x_k$ is also in this group, we  arrive at a contradiction. If item with score $x_k$ is not in this group, then we swap the item with some item with score $s_p$ to arrive at a contradiction.  

Case 2: The other case to consider is when none of the items with scores $\{x_1,\ldots x_{k-1}\}$ contribute to the minimum compatible pair in their respective groups. In this case, we start with the group $g$ that contains at least one item with scores of $\{x_1 , \ldots x_{k-1}\}$ and has the highest minimum compatible weight. We iteratively swap the remaining items with scores $x_i$ for some $i$ which are not in $g$, with items from $g$ with score $s_j$ for some $j$. This can be done without decreasing the MoM objective value. We then continue swapping iteratively to get to the partition where the group $g$ consists of items with scores $\{x_1,\ldots,x_{k-1}\}$ along with some other item with score $s_p$ for some $p$. Now, we can use the argument from case $1$ to arrive at the required contradiction.
\end{proof}

\begin{thm}\textbf{(Hardness of MoA)}
\label{thm:MoA-hardness}
Computing the optimal partition for the minimum of averages (MoA) objective for a score based compatibility matrix is NP-hard.
\end{thm}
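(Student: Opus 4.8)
The plan is to first collapse the MoA objective, under the score structure, to a purely additive balanced-partition problem, and then reduce a known strongly NP-complete number problem to that. The starting observation is that for a score-based matrix the happiness of a group simplifies: using the identity $\sum_{i,j \in \S} W_{ij} = \left(\sum_{i \in \S} s_i\right)^2$ already recorded in the proof of Theorem \ref{thm:homophily}, and the fact that $|\S| = k$ for every group, we get $H(\S|\W) = \frac{1}{k^2}\left(\sum_{i \in \S} s_i\right)^2$, i.e. the square of the group's mean score. Since all scores are in $\R_+$ and every group has the same fixed size $k$, the map from a group's score-sum to its happiness is strictly increasing on $\R_+$. Hence $\max_{\Pi} \min_i H(\S_i|\W)$ is attained by exactly the partitions maximizing $\min_i \sum_{j \in \S_i} s_j$; that is, MoA is equivalent to splitting the $n$ scores into $m$ blocks of size exactly $k$ so as to maximize the smallest block-sum.

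Next I would reduce from \textsc{3-Partition} \cite{garey1979guide}, which is strongly NP-complete. Given positive integers $a_1,\ldots,a_{3q}$ with $\sum_i a_i = qB$, set $n = 3q$, $k = 3$, $m = q$, and $s_i = a_i$. The total score $qB$ is distributed over $q$ groups, so the smallest group-sum is at most the mean $B$, with equality if and only if every group sums to exactly $B$ --- equivalently, if and only if the $a_i$ admit a partition into $q$ triples each of sum $B$. Therefore the optimal MoA value equals $\frac{B^2}{9}$ precisely when the \textsc{3-Partition} instance is a \textsc{Yes} instance, and is strictly smaller otherwise. Deciding whether the MoA optimum reaches the threshold $\frac{B^2}{9}$ thus solves \textsc{3-Partition}. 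As the reduction is clearly polynomial and \textsc{3-Partition} is strongly NP-complete, MoA is NP-hard (indeed strongly NP-hard) already for $k=3$ with integer scores of polynomially bounded magnitude. Since the theorem fixes no particular $k$, hardness for the single value $k=3$ already establishes the claim; larger $k$ can be obtained by appending $k-3$ zero-score dummy items per group, where the standard side constraints $\frac{B}{4} < a_i < \frac{B}{2}$ now become useful to force each group to contain exactly three real items.

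The calculations are routine, so the only points needing care are the two equivalences. First, converting ``max-min of averages'' into ``max-min of sums'' must be an iff at the level of optimal partitions; this is exactly where I use that the per-group transform $x \mapsto x^2/k^2$ is strictly increasing on $\R_+$ and that $k$ is the same for every group (if group sizes varied, or scores could be negative, the reduction would break). Second, the threshold argument rests on the averaging fact that with fixed total score $qB$ over $q$ equal-count groups the minimum block-sum cannot exceed the mean $B$, which forces all blocks to equal $B$ in the extremal case. I expect this ceiling-forcing step to be the main thing to state cleanly, even though it is immediate; everything else is bookkeeping.
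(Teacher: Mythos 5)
Your proposal is correct and follows essentially the same route as the paper: a reduction from strongly NP-complete \textsc{3-Partition} with $k=3$, $m=q$, scores equal to the item values, and the averaging argument that a minimum group-sum of $B$ forces every group-sum to equal $B$ exactly. Your explicit treatment of the monotone transform $x \mapsto x^2/k^2$ (which the paper leaves implicit in the phrase ``the lowest total score determines MoA'') and the dummy-item extension to general $k$ are minor refinements of the same argument, not a different approach.
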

\begin{proof}
Consider an instance $\mathcal{I}$ of \textsc{3-Partition}, with $n = 3m$ items, each associated 
with a value; $s_i$ is the value for the item $i$. The total sum of the 
values is $\sum_{i \in \mathcal{I}}{s_i} = mB$, and the size of each item 
is $\frac{B}{4} < s_i < \frac{B}{2}$. The decision problem is whether there 
exists a partition of the items into $m$ partitions, such that 
the sum of the values of items in each partition is exactly $B$. This is a strongly NP-hard problem. 

Now, construct an instance $\mathcal{I}'$ of the MoA problem, with $k=3$, where we create an item in 
$\mathcal{I}'$ 
corresponding to every item in $\mathcal{I}$, and the score associated with the item 
in $\mathcal{I}'$ is set to the value of the 
corresponding item in $\mathcal{I}$. In the MoA problem our 
goal is to partition the items in $\mathcal{I}'$  
in to $m$ groups, each containing exactly $k=3$ items, such that the total score of each group is as high as possible. In fact, the lowest total score determines MoA. The decision question 
we ask here is as follows: does there exist a partitioning of the $n$ items into $m$ 
groups, each group containing exactly $k=3$ items, such that the total score of each group 
is at least $B$?
 
If there exists a partition of the items 
into $m$ groups, such that every group contains exactly $3$ items, 
and the total score of every group is $\geq B$, then that 
corresponds to a \textsc{YES} instance of 
\textsc{3-Partition}. Clearly, every group has to sum up to exactly $B$, since the total sum 
is $mB$. Alternately, if $\mathcal{I}$ corresponds to a \textsc{YES} instance of 
\textsc{3-Partition} for a given value $B$, then note that the corresponding $m$ partitions 
would have exactly $3$ items because of the choice of the range of the values 
of the items (and hence, the range of the scores), and each 
partition would sum up to $B$. This would give a feasible solution for the 
MoA in $\mathcal{I}'$, with each group's score summing up to $\geq B$. 
This completes the reduction. 
\end{proof}

While MoA is NP-Hard in general, we give a simple algorithm \textsc{Greedy} for the MoA objective given 
$\W \in \bR^{n \times n}$, a score based compatibility matrix: Sort the objects/items by their scores in a non-increasing order. Take the next unassigned item from 
the list and assign it to the partition (or group) with the lowest total score thus far, as long as the partition is not full 
(i.e., it has $<k$ vertices). Break ties arbitrarily. The best known approximation factor for this algorithm is $\max{\left(\frac{2}{k}, \frac{1}{m}\right)}$ (\cite{he}). We prove a constant factor approximation below, a significant improvement.
\begin{thm}\textbf{(Greedy Algorithm is $\frac{1}{2}$ approx for MoA)}
\label{thm:MoA-approx} Algorithm \textsc{Greedy} produces a $k$-partition that is a $\frac{1}{2}$ approximation for the MoA objective for score based compatibilities.
\end{thm}
\begin{proof}
For the first $m$ 
iterations, each group will receive one item each from the top of the sorted list. 
Define a Reduced Set $\mathcal{R}$ as a set of groups in the final solution obtained by the 
greedy algorithm, such that:
(a) for any $(p,q) \in \mathcal{R}$, there is at least one iteration $t$ of 
the greedy algorithm after the first $m$ iterations when $p$ 
is favored over $q$ for assigning the next available item, while $q$ is not 
full, and there is also at least one iteration $t'>m$ when $q$ receives an item while $p$ is not full, 
(b) $\mathcal{R}$ has the maximum cardinality among all such sets of groups, 
(c) $\mathcal{R}$ includes the group $j\in [m]$ that receives the $m^{th}$ item in the sorted 
list as its first item. 

Let $v_{j,i}$ be the $i^{th}$ item added to the $j^{th}$ group with size $s_{j,i}$, and the earliest iteration by which all groups in $\mathcal{R}$ get full be $t_R$. Then it  follows from the definition of $\mathcal{R}$:
\begin{observation}
\label{obs:1}
Any group $j\notin \mathcal{R}$ would receive all the items from 
from the second one to the $k^{th}$ one, that is, items $\{v_{j,2}, \ldots, v_{j,k}\}$ 
in iterations $t>t_R$. 
\end{observation}
 
This follows from the definition of $\mathcal{R}$. Suppose a group $j' \notin \mathcal{R}$ received 
the item $v_{j',2}$ in an iteration $t'<t_R$. In that case, 
there exists at least one iteration, specifically, $t'$, when $j'$ is favored over each of the groups in 
$\mathcal{R}$, and at least one iteration $\leq t_R$, when each of the groups in $\mathcal{R}$ 
are favored over $j'$, since by $t_R$ all groups 
in $\mathcal{R}$ get full. This implies, that $j'$ should have been included in $\mathcal{R}$, 
and $\mathcal{R}$ is not a maximal set. 
 


\begin{claim}
\label{cl:1}
Let the items in the sorted list be $\{v_1, v_2, \ldots, v_n\}$, 
where $s_1 \geq s_2 \geq \ldots s_n$, ($s_i$ is the score of item $v_i$, the 
$i^{th}$ item in the sorted list). Let $m'=m-|\mathcal{R}|$. 
The items $v_{m'+1}, v_{m'+2}, \ldots, v_{m'+|\mathcal{R}|k}$ 
get assigned to groups in $\mathcal{R}$. 
\end{claim}

\begin{proof}
Suppose an item in the sequence $v_{m'+r}$, $r \in \{1, \ldots, |\mathcal{R}|k\}$  
gets assigned to a group $j \notin \mathcal{R}$. 

Case 1: $v_{m'+r}, r\geq 1$ is the first item assigned to $j$. Now, there is at least one group $j' \in \mathcal{R}$, that 
receives an item  earlier in the list than $v_{m'+r}$ as its first item as the greedy algorithm
 assigns one item each to each of the groups, 
before assigning the second item to any group, and there are only $m'$ groups outside $\mathcal{R}$. 
Thus, once $j'$ receives the second item, the total score of $j'$ 
will be  greater than that of $j$, and hence, $j$ will receive at least 
one item before $j'$ is chosen again. But from Observation \ref{obs:1},  groups $\notin \mathcal{R}$ do not receive their second item till 
all groups in $\mathcal{R}$ are full. 

Case 2: $v_{m'+r}$ is the second item assigned to $j \notin \mathcal{R}$. 
Since from Observation \ref{obs:1}, $j$ receives its 
second item only after $t_R$,  and all groups receive at least 
one item before any group receives its second item, 
this is possible only if $m'+r > |\mathcal{R}|k + m'$. However, 
$m'+r \leq |\mathcal{R}|k + m'$ by assumption. Hence this is not possible either. 
\end{proof}

From Claim \ref{cl:1}, it follows that the $m'$ groups $\notin \mathcal{R}$, receive 
the $m'$ largest score items as their first items, specifically, items 
$v_1, \ldots, v_{m'}$. Let us call these first $m'$ items as \emph{large} items, 
since the groups receiving them do not get their second item till all groups 
in $\mathcal{R}$ are full. 
Moreover, it also follows that any subset of $|\mathcal{R}|k$ items from 
$\mathcal{V} \setminus \{v_1, \ldots, v_{m'}\}$ (that is, from the set of all items 
excluding the large items), would have a total score at most the total 
score of the items used to fill $\mathcal{R}$. This can be seen 
from the fact that $\mathcal{R}$ gets the highest score $|\mathcal{R}|k$ items, 
excluding the large items. 

\begin{claim}
\label{cl:2}
Denote the sum of scores of a set of items $\mathcal{S}$ as $Sum(\mathcal{S})$. Then,
$$\max_{\left(\left(\mathcal{S}\subset \{\mathcal{V}\setminus \{v_1, \ldots, v_{m'}\}\}\right) \cap \left(|\mathcal{S}| = |\mathcal{R}|k\right)\right)}{Sum(\mathcal{S})} \leq \sum_{j \in {\mathcal{R}}}{\sum_{i \in [k]}{{s_{i,j}}}}.$$ 
\end{claim}

\begin{proof}
This follows from Claim \ref{cl:1}. Therefore, 
 $\mathcal{R}$ gets the highest score $|\mathcal{R}|k$ items, excluding the large items. 
\end{proof}
%

\begin{claim}
\label{cl:3}
 Let $OPT$ be the optimal value of MoA for a given instance. Then
$OPT \leq \frac{\sum_{j \in {\mathcal{R}}}{\sum_{i \in [k]}{{s_{i,j}}}}}{|\mathcal{R}|}$
\end{claim}

\begin{proof}
There is a partition of the items into $m$ groups, 
such that the sum of the scores of each group is $\geq OPT$. Let $AVG_{\mathcal{R}} = \frac{\sum_{j \in {\mathcal{R}}}{\sum_{i \in [k]}{{s_{i,j}}}}}{|\mathcal{R}|}$. 
If $|\mathcal{R}|=m$, then the claim is obvious.  
Suppose for contradiction, that $OPT> AVG_{\mathcal{R}}$ when $1 \leq |\mathcal{R}| <m$. 
We have $m' = m - |\mathcal{R}|$ groups outside the reduced set. 
Therefore, we only have $m'$ large items, that can be distributed to at most 
$m'$ groups in any optimal solution. 
The remaining $\geq |\mathcal{R}|$ groups in the optimal solution (that do not 
receive any large item) would each need to get 
$k$ items, summing up to $\geq OPT$. Therefore, there must exist 
a subset of items, say $\mathcal{S}$, of cardinality $|\mathcal{R}| k$, excluding 
the large items, such that their score sums 
up to $\geq |\mathcal{R}| OPT$. 
From Claim \ref{cl:2}, therefore, 
$AVG_{\mathcal{R}}|\mathcal{R}| \geq Sum(\mathcal{S}) \geq |\mathcal{R}| OPT$. 
However, this contradicts the assumption that $AVG_{\mathcal{R}} < OPT$. 
This completes the proof. 
\end{proof}

Hence, we conclude from Claim \ref{cl:3} that 
$OPT \leq AVG_{\mathcal{R}}$. 
Suppose the minimum total score is realized by a group $r \notin \mathcal{R}$. 
Clearly, $s_{r,1} \geq \sum_{p \in [k-1]}{s_{q,p}}\ \forall q \in \mathcal{R}$. 
Since we assign the items in non-increasing order of their scores, 
$s_{q,k} \leq \frac{\sum_{p \in [k-1]}{s_{q,p}}}{k-1} \ \forall q \in \mathcal{R}$, hence, 
$s_{r,1} \geq \left(1 - \frac{1}{k}\right)\frac{\sum_{j \in {\mathcal{R}, i \in [k]}{s_{i,j}}}}{|\mathcal{R}|}$. Therefore, $s_{r,1}\geq \left(1 - \frac{1}{k}\right)OPT$. 
Hence, for $k\geq 2$, the realized minimum total score in this case is $\geq \frac{1}{2} OPT$. 

Now, consider the case when the minimum total score is realized by some group in $\mathcal{R}$. 
Let $p$ and $q$ be the groups with 
the lowest and highest sum of scores in $\mathcal{R}$ respectively. 
By definition of $\mathcal{R}$, there exists at least one iteration after all the groups 
have received one item each (that is, some iteration $>m$), when the total score of $q$ 
was lower than $p$, and hence $q$ got assigned an item favored over $p$.
Let $k' \in [2, \ldots, k]$ be the highest index such that the $q$ received the $k'^{th}$ item
while $p$ was not full. Let $p$ have $k''<k$ items assigned at that time. 
Since $q$ was favored over $p$, $\sum_{i\in [k'']}{s_{p,i}} \geq \sum_{j\in [k'-1]}{s_{q,j}}$. 
Also, $s_{q,k'} \leq s_{p,k''}$. 
After this, $p$ received $k-k''$ more items, before $q$ received 
any of the items in $\{v_{q,k'+1}, \ldots, v_{q,k}\}$, if $k'<k$ (otherwise, $q$ 
would not receive any items after this). 
Clearly, $s_{q,\ell} \leq s_{p,k}$ for $\ell \in \{k'+1, \ldots, k\}$. 
Hence, the total score of $q$ is 
$\sum_{i\in[k]}{s_{q,i}} \leq \sum_{i\in [k'']}{s_{p,i}} + s_{p,k''} + (k-k')s_{p,k}$. 
Therefore, 
$\sum_{i\in[k]}{s_{q,i}} \leq \sum_{i\in [k'']}{s_{p,i}} + s_{p,k''} + 
(k-k'')s_{p,k} + (k''-k') s_{p,k}$
$\leq S_p + s_{p,k''} + (k''-k') s_{p,k}$. 
Now, $k''-k'< k$, and because we consider items in sorted order, 
$s_{p,\ell} \geq s_{p,k} \forall \ell \in [k]$. Hence, 
$s_{p,k''} +  (k''-k') s_{p,k} \leq s_{p,{k''}} + \sum_{i \in \{1, \ldots, k\} \setminus k''}
{s_{p,i}} \leq S_p$. 
Hence, $S_q \leq 2 S_p$. Since $S_q \geq AVG_{\mathcal{R}} \geq OPT$, the realized minimum total score $S_p \geq \frac{OPT}{2}$. 
This concludes the proof. 
\end{proof}

\begin{algorithm}[H]
\begin{algorithmic}
\label{alg:ordLearn}
\STATE \textbf{Paramters:} Number of users $n$, number of groups $m$, group size $k$, confidence $\delta$
\STATE Set $\delta^*$ as in Theorem \ref{thm:partitionLearn-score}
\STATE Generate a Erdos-Renyi random graph $G \sim \mathcal{G}(n,\frac{\log(n)}{n})$
\STATE Let $diam(G)$ be the diameter of $G.$ 
\STATE Let $E_1,\ldots E_{\ell}$ be a partition of the edges of $G$ into $\ell$ bins got using an (approximate) minimum edge coloring $G$.
\FOR {$i$ = $1$ to $\ell$}
\STATE \quad Divide $E_i$ arbitrarily into $b_i := \lceil{|E_i|/m}\rceil$ disjoint bins $\{B^i_1,\ldots B^i_{b}\}$.
\FOR{ $j = 1: b_i$}
\STATE  Play $k$-partitions corresponding to $B^i_{j}$ for $O\Big(\frac{diam(G)^2}{\Delta^2}\ln(\frac{1}{\delta^*})\Big)$ rounds each.
\STATE  Estimate $\hat{s}_p - \hat{s}_q$ for all edges $(p,q) \in B^i_j$
\ENDFOR
\ENDFOR
\STATE For each $k$, estimate $\hat{s}_k$ by summing the estimates for $\hat{s}_i - \hat{s}_j$ along the shortest path in $G$ from $1$ to $k$. If no path exists for node $k$, set $\hat{s}_k$ = 0;
\STATE Return $\hat{\sigma} = \argsort(\hat{\s})$.
\caption{\textsc{LEARNORDER}}
\end{algorithmic}
\end{algorithm}

\section{Learning Score Vector and Guarantees}
\label{sec:Learning}
In this section, we propose an algorithm for adaptively learning the optimal ordering corresponding to the score vector of a pairwise compatibility matrix. The learning proceeds in rounds and for every group $\S^t_i$ in a chosen $k$-partition $\Pi(\S^t_1,\ldots, \S^t_m)$ at round $t$, we assume that we receive a iid noisy version of the happiness of the group as the response i.e. $H(\S^t_i) + \eta^t_i$ where $\forall t,i, ~\eta^t_i \in [-b,b]$ for some $b > 0 $ and $\mathbb{E}(\eta^t_i) = 0$. 

Our goal is to learn the ordering corresponding to the score vector $\s \in \bR^n$ of the pairwise compatibility matrix $\W \in \bR^{n \times n}$, by choosing groups adaptively for $T(n,k,\delta)$ rounds. Here $k$ is the size of groups chosen in each round, and $\delta$ is the failure probability for learning a wrong ordering. 
Once the ordering is learned, we can compute the optimal (or approximately optimal) partition  for the various objectives by sorting the scores and invoking Theorems \ref{thm:homophily}, \ref{thm:heterophily} and \ref{thm:MoA-approx}. 



The algorithm to learn the ordering is given in Algorithm $1$. The Algorithm \textsc{LearnOrder} begins by generating a random Erdos-Renyi graph $G$ where the probability of an edge being present is $\frac{\log(n)}{n}$. Thus the expected number of edges in the graph is $n\log(n)$. The edges of $G$ are then partitioned into disjoint pieces using an approximate $O(\Sigma)$ edge coloring where $\Sigma$ is the maximum degree of the $G$.  For each of these pieces,  for every edge $(i,j)$ in the piece, groups $\{i,\S_{ij}\}$ and $\{j,\S_{ij}\}$ are chosen where $\S_{ij}$ is a fixed $k-1$ sized set that does not contain $i$ or $j$. The idea is that, by obtaining the un-normalized happiness values $h_i = (s_i + \sum_{l \in \S_{ij}}s_l)^2+k^2\eta_i$ and $h_j = (s_j + \sum_{l \in \S_{ij}}s_l)^2+k^2\eta_j$ for these two groups over multiple rounds, one can compute a estimate of the difference of the corresponding scores $s_i - s_j$ with high confidence. As we only require the relative ordering to be correct, we can without loss of generality, set $\hat{s}_1 = 0$ and compute the remaining scores using the following procedure: For node $k$, we find the shortest path in $G$ that connects $1$ and $k$ and sum the differences along this path (w.h.p $G$ is connected and so there exists at least one path connecting $1$ and $k$). Each of these differences are estimates and hence the confidence in the sum of the these estimates depend on the diameter of $G$. We formally state the guarantee for \textsc{learnorder} below.

\begin{thm}
\label{thm:partitionLearn-score}(\textbf{PAC guarantee for} \textsc{learnorder})
Let $\W \in \bR^{n \times n}$ be a score based  compatibility matrix with score vector $\s \in \bR^n$. Let $\Delta_{\min} = \displaystyle \min_{i \neq j} |s_i - s_j|$, $\Delta = 2ks_{\min}\Delta_{\min} - \Delta_{\min}^2$ and let $\delta^* = \Big(1-\Big(\exp(-\frac{\delta}{k})\Big)\Big)/m$. Then, algorithm \textsc{LearnOrder} (Algorithm $1$) outputs a permutation $\hat{\sigma}$ whose ordering is same as that of $\s$ with probability at least $1 - \delta$ after $O\Big(\frac{|E|}{m}\frac{diam(G)^2}{\Delta^2}\ln(\frac{1}{\delta^*})\Big)$ rounds.
\end{thm}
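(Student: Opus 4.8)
The plan is to prove the statement in four stages: a lower bound on the per-edge happiness ``signal'' coming from the score structure, a Hoeffding concentration that converts the round budget into per-edge score accuracy, an error-propagation argument along shortest paths in $G$, and a union bound that calibrates $\delta^*$ and reads off the round complexity. For the first stage, fix an edge $(i,j)$ and let $c=\sum_{l\in\S_{ij}}s_l$, so the two played groups $\{i\}\cup\S_{ij}$ and $\{j\}\cup\S_{ij}$ have noiseless un-normalized happiness $S_1^2=(s_i+c)^2$ and $S_2^2=(s_j+c)^2$. Taking $s_i>s_j$ and writing $S_1^2-S_2^2=(s_i-s_j)\bigl(2S_1-(s_i-s_j)\bigr)$, I would observe that $d\mapsto 2S_1 d-d^2$ is increasing for $d<S_1$ and that $S_1\ge ks_{\min}$ (a sum of $k$ scores each at least $s_{\min}$); minimizing over $d=s_i-s_j\ge\Delta_{\min}$ and then over $S_1$ gives $|S_1^2-S_2^2|\ge 2ks_{\min}\Delta_{\min}-\Delta_{\min}^2=\Delta$. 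This pins $\Delta$ down as the minimum detectable happiness gap, and since $\sqrt{S_1^2}-\sqrt{S_2^2}=s_i-s_j$ exactly, it justifies the estimator $\widehat{s_i-s_j}=\sqrt{\bar h_i}-\sqrt{\bar h_j}$ formed from the empirical averages $\bar h_i,\bar h_j$.

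Next I would apply Hoeffding's inequality to the averaged noise $\bar\eta$ over the $R=O\!\bigl(\mathrm{diam}(G)^2\Delta^{-2}\ln(1/\delta^*)\bigr)$ rounds assigned to each bin. A budget of this order makes each group's empirical happiness lie within a constant fraction of $\Delta$ of its true value with probability at least $1-\delta^*$; bounding the denominator in $\sqrt{\bar h_i}-S_1=(\bar h_i-S_1^2)/(\sqrt{\bar h_i}+S_1)$ below by $ks_{\min}$ then converts this into the per-edge guarantee $|\widehat{s_i-s_j}-(s_i-s_j)|\le \Delta_{\min}/(2\,\mathrm{diam}(G))$. The quantitative heart of this step is that the factor $2ks_{\min}$ relating the happiness gap $\Delta$ to the score gap $\Delta_{\min}$ is exactly the local derivative $d(S^2)/dS$, so the happiness-space accuracy afforded by $R$ rounds and the score-space accuracy demanded downstream scale together, making $\mathrm{diam}(G)^2/\Delta^2$ the correct budget.

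I would then invoke that $G\sim\mathcal G(n,\log n/n)$ is connected with high probability (it sits above the connectivity threshold), so every node is linked to node $1$ by a path of length at most $\mathrm{diam}(G)$. Summing the signed per-edge estimates along this path, the errors accumulate additively, so $|\hat s_k-(s_k-s_1)|\le \mathrm{diam}(G)\cdot \Delta_{\min}/(2\,\mathrm{diam}(G))=\Delta_{\min}/2$ for every $k$. Since every pair of true scores is separated by at least $\Delta_{\min}=\min_{i\neq j}|s_i-s_j|$, estimates all lying within $\Delta_{\min}/2$ of the (shift-invariant) truth preserve every pairwise comparison, and hence $\hat\sigma=\argsort(\hat\s)$ induces the same ordering as $\argsort(\s)$.

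Finally I would assemble the probability and the round count. Each played $k$-partition contains $m$ groups, and the choice $\delta^*=(1-\exp(-\delta/k))/m$ is exactly calibrated so that all $m$ groups concentrate simultaneously with probability at least $1-m\delta^*=\exp(-\delta/k)$; compounding this over the remaining $k$-fold structure of the reconstruction yields overall success at least $\exp(-\delta)\ge 1-\delta$. For the round count, the approximate edge coloring splits $E$ into $O(\Sigma)$ matchings, each matching is cut into bins of $m$ edges so that the total number of bins over all color classes is $O(|E|/m)$, and multiplying by the per-bin budget gives the claimed $O\!\bigl(\tfrac{|E|}{m}\tfrac{\mathrm{diam}(G)^2}{\Delta^2}\ln(1/\delta^*)\bigr)$ rounds. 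I expect the main obstacle to lie in the second and third stages: propagating the happiness-space concentration through the nonlinear square root and then along a path whose length is only controlled by $\mathrm{diam}(G)$, all while keeping the accumulated error strictly below $\Delta_{\min}/2$, and in pinning down the precise nested union bound that produces the stated form of $\delta^*$.
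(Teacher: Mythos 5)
Your proposal follows essentially the same route as the paper's proof sketch: the same estimator $\sqrt{h_i}-\sqrt{h_j}$ built from shared-complement groups, Hoeffding concentration over the per-bin round budget, error propagation along shortest paths of length at most $diam(G)$, and a union bound calibrated through $\delta^*$. If anything, you are more careful than the paper at two points: you explicitly derive $\Delta = 2ks_{\min}\Delta_{\min}-\Delta_{\min}^2$ as the minimum detectable happiness gap (the paper only posits it), and you demand per-edge accuracy $\Delta_{\min}/(2\,diam(G))$ so the accumulated error stays strictly below $\Delta_{\min}/2$, whereas the paper's sketch only controls the accumulated error to within $\Delta_{\min}$, which on its own does not rule out flipping two adjacent scores.
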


\begin{figure*}[ht]
    \begin{subfigure}[b]{0.5\textwidth}
        \includegraphics[width=\textwidth]{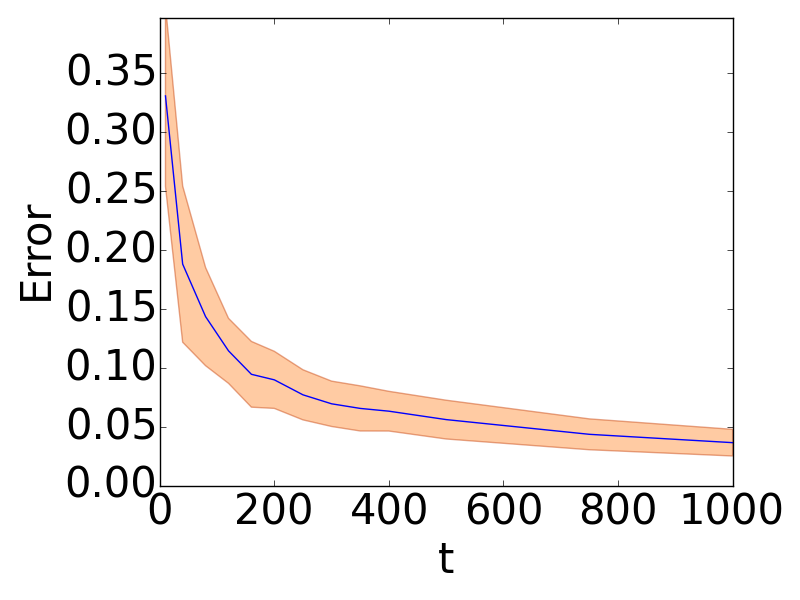}
    \end{subfigure}
    ~
    \begin{subfigure}[b]{0.5\textwidth}
        \includegraphics[width=\textwidth]{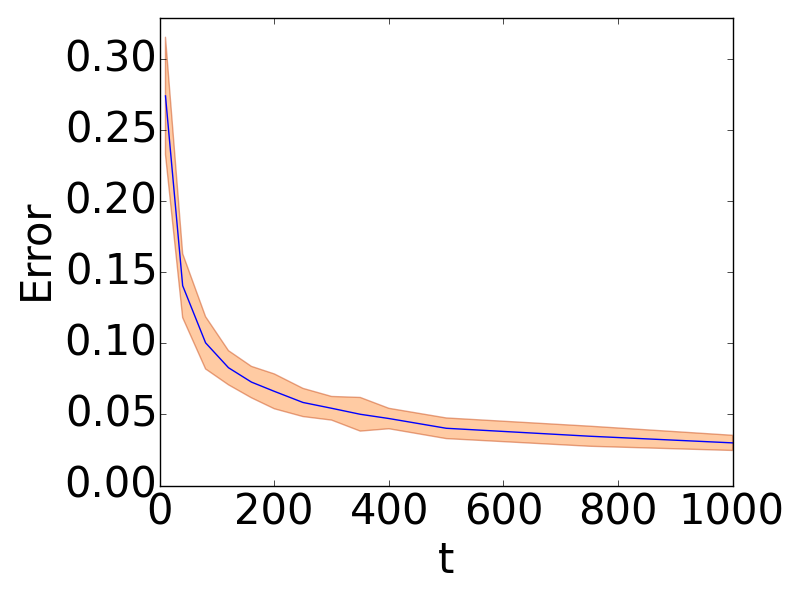}
    \end{subfigure}
    \caption{ Normalized error between the estimated and true weights (y-axis) of \textsc{LearnOrder}. The left plot is for a random graph  and the right is for a Facebook graph instance.}
    \label{fig:learn-weights}
\end{figure*}

\textbf{Proof Sketch:} For a chosen edge pair $(i,j)$, let $h_i$ and $h_j$ denote the unnormalized happiness values obtained by choosing the groups $(i,\S_{ij})$ and $(j,\S_{ij})$ where $|\S_{ij}| = k - 1$ and $i,j \notin \S_{ij}$.
We have, 
$$\sqrt{h_i} - \sqrt{h_j} = \sqrt{(s_i + a)^2 + k^2\eta_i} - \sqrt{(s_j + a)^2 + k^2\eta_j}$$
where $\eta_i$ and $\eta_j$ correspond to the bounded random noise and $a = \displaystyle \sum_{l \in \S_{ij}}s_l.$ 
If the noise $\eta_i, \eta_j$ were not present, then the difference $\sqrt{h_i} - \sqrt{h_j} = s_i - s_j$, which is what we want to estimate. Nevertheless, we can control the noise by playing this pair of groups for $O\Big(\frac{diam(G)^2}{\Delta^2}\ln(\frac{1}{\delta^*})\Big)$ rounds as in the Algorithm and averaging the happiness values obtained to obtain estimates $\hat{s}_{ij}$. In this case, after these many rounds, we have  with probability at least $1 - \delta^*,$
 $$ \frac{(s_i - s_j) - \Delta_{\min}}{diam(G)} \le \hat{s}_{ij} \le \frac{(s_i - s_j) + \Delta_{\min}}{diam(G)}\quad \forall i,j$$

When computing the estimate of a pair $(s_1,s_k)$ not in the edge set $E$, the algorithm sums up the estimated weights on the shortest path from $1$ to $k$. As the shortest path is at most $diam(G)$ long by definition, we obtain estimates for all pairs of the form $(s_1,s_k)$ such that all the estimated values $\hat{s}_{1k}$ satisfies with probability $1- \delta^{*}$,
$|\hat{s}_{1k} - s_{1k}| \le \Delta_{\min}~ \forall k.$ 

Thus under the above condition, if we fix $\hat{s}_1 = 0$ and obtain values for all other vertices, we can sort them to produce an ordering. It is easy to see that this ordering will exactly correspond to the ordering of the actual score vector $\s$. 

~\\
\textbf{Remark:} The sample complexity (i.e., the number of groups to be chosen) by the above Theorem depends on the diameter of the random graph $G$. It is known that for large enough $n$, $diam(G)$ is concentrated sharply around $2\frac{\log(n)}{\log(n/2)}$ and the number of edges behaves as $O(n\log(n))$.

\section{Experiments}
\label{sec:experiments}
We assess the quality of the \textsc{LearnOrder}  algorithm using simulated and real data.  The graph instances we chose were the following. (a) \textit{Random:} for this synthetically generated dataset, we fixed the group size $k = 4$ and the graph size $n = 16$. We added uniform noise between $[-1,1]$ to the feedback in each round and the score for each item was drawn uniformly at random from $\{1, ..., 10\}$. And (b) \textit{Social network:} a $16$ node instance was sampled from the Facebook friendship graph, built from an existing anonymized 
dataset of Facebook users' data (Leskovec and Krevl 2014). The dataset has $4039$ 
user nodes and $88234$ unweighted edges. We used the \textit{Jaccard 
similarity coefficient} of features such as education, hometown, 
language, etc to obtain scores for the 
users. The performance of \textsc{LearnOrder} is shown in Figure \ref{fig:learn-weights} (averaged over $30$ runs), and is in terms of the normalized error between the estimated score vector and the true score vector. It decreases as the number of rounds $t$ increases.

      In addition to showing that the learning algorithm indeed converges, the experiments add empirical support to the fact that the number of rounds needed to learn the true scores within 10\% normalized error is very practical (for instance ~70 rounds for the Facebook subgraph of size 16). Since the weight matrix was filled using a pairwise similarity measure (using demographical and other user specific metadata) and no intrinsic score was assumed, the experiment shows that the weight matrix is naturally low rank (allowing us to learn the scores very well) for this dataset. As a consequence, for this dataset, we could infer that the users' intrinsic characteristics reasonably determine who they are friends with.


\section{Conclusions}
We studied the problem of grouping a set of users using their pairwise compatibilities. We first showed hardness and inapproximability results when no assumptions on the pairwise compatibility values are made. We then studied the intrinsic score model for the compatibility, a model that is not only simple and straight forward but also very similar to the popular Bradley--Terry--Luce (BTL) model for pairwise comparisons. Under this model, we related the optimal groupings to  homophilous and heterophilous groupings which are well studied in the psychology literature. 
We proposed the \textsc{LearnOrder} algorithm, which after choosing a small number of groups, adaptively learns the best ordering corresponding to the score vector of the pairwise compatibility matrix. Our experiments on both synthetic and real datasets demonstrate the efficacy of our algorithm.

We note that there may be several applications where the pairwise compatibilities between users/items may depend on multiple features (instead of one) and the pairwise compatibility matrix can in general be low rank (instead of being score based). In such cases, our framework can be slightly modified to incorporate a matrix completion subroutine to recover the low rank compatibility matrix. However, the results regarding the optimality of the homophilous/heterophilous partitions do not follow. The analysis of this is beyond the scope of the current work. 

In the future, we would like to consider other relevant structures for the happiness index and develop algorithms for the same, possibly with statistical as well as computational guarantees. 
\balance
\bibliographystyle{abbrv}
\bibliography{sigproc}  

\begin{thebibliography}{10}

\bibitem{alpert1973optimal}
M.~I. Alpert and W.~T. Anderson.
\newblock Optimal heterophily and communication effectiveness: Some empirical
  findings.
\newblock {\em Journal of Communication}, 23(3):328--343, 1973.

\bibitem{baccara2012homophily}
M.~Baccara and L.~Yariv.
\newblock Homophily in peer groups.
\newblock {\em Available at SSRN 2045156}, 2012.

\bibitem{bansal2011min}
N.~Bansal, U.~Feige, R.~Krauthgamer, K.~Makarychev, V.~Nagarajan, J.~Naor, and
  R.~Schwartz.
\newblock Min-max graph partitioning and small set expansion.
\newblock In {\em FOCS, 2011}, pages 17--26. IEEE, 2011.

\bibitem{bistaffa2014sharing}
F.~Bistaffa, A.~Farinelli, and S.~D. Ramchurn.
\newblock Sharing rides with friends: a coalition formation algorithm for
  ridesharing.
\newblock {\em AAAI}, 2015.

\bibitem{bradley}
R.~A. Bradley and M.~E. Terry.
\newblock Rank analysis of incomplete block designs: I. the method of paired
  comparisons.
\newblock {\em Biometrika}, 39(3/4):324--345, 1952.

\bibitem{brindley}
J.~Brindley, L.~M. Blaschke, and C.~Walti.
\newblock Creating effective collaborative learning groups in an online
  environment.
\newblock {\em The International Review of Research in Open and Distributed
  Learning}, 10(3), 2009.

\bibitem{cordone}
R.~Cordone and F.~Maffioli.
\newblock On the complexity of graph tree partition problems.
\newblock {\em Discrete Applied Mathematics}, 134(1):51--65, 2004.

\bibitem{dies1993research}
R.~Dies.
\newblock Research on group psychotherapy: Overview and clinical applications.
\newblock {\em Group therapy in clinical practice}, pages 473--518, 1993.

\bibitem{edmonds}
J.~Edmonds.
\newblock Paths, trees, and flowers.
\newblock {\em Canadian Journal of mathematics}, 17(3):449--467, 1965.

\bibitem{elbaum1999grouping}
B.~Elbaum, S.~Vaughn, M.~Hughes, and S.~Moody.
\newblock Grouping practices and reading outcomes for students with
  disabilities.
\newblock {\em Exceptional children}, 65(3):399, 1999.

\bibitem{eppstein1991equipartitions}
D.~Eppstein, J.~Feigenbaum, and C.~Li.
\newblock Equipartitions of graphs.
\newblock {\em Discrete mathematics}, 91(3):239--248, 1991.

\bibitem{feder1999complexity}
T.~Feder, P.~Hell, S.~Klein, and R.~Motwani.
\newblock Complexity of graph partition problems.
\newblock In {\em Proceedings of the thirty-first annual ACM STOC}, pages
  464--472. ACM, 1999.

\bibitem{foorman2001critical}
B.~R. Foorman and J.~Torgesen.
\newblock Critical elements of classroom and small-group instruction promote
  reading success in all children.
\newblock {\em Learning Disabilities Research \& Practice}, 16(4):203--212,
  2001.

\bibitem{garey1979guide}
M.~R. Garey and D.~S. Johnson.
\newblock A guide to the theory of np-completeness.
\newblock {\em WH Freemann}, 1979.

\bibitem{gaur2008capacitated}
D.~R. Gaur, R.~Krishnamurti, and R.~Kohli.
\newblock The capacitated max k-cut problem.
\newblock {\em Mathematical Programming}, 115(1):65--72, 2008.

\bibitem{grotschel1990facets}
M.~Gr{\"o}tschel and Y.~Wakabayashi.
\newblock Facets of the clique partitioning polytope.
\newblock {\em Mathematical Programming}, 47(1-3):367--387, 1990.

\bibitem{he}
Y.~He, Z.~Tan, J.~Zhu, and E.~Yao.
\newblock $\kappa$-partitioning problems for maximizing the minimum load.
\newblock {\em Computers \& Mathematics with Applications}, 46(10):1671--1681,
  2003.

\bibitem{horwitz2005compositional}
S.~K. Horwitz.
\newblock The compositional impact of team diversity on performance:
  Theoretical considerations.
\newblock {\em Human resource development review}, 4(2):219--245, 2005.

\bibitem{horwitz2007effects}
S.~K. Horwitz and I.~B. Horwitz.
\newblock The effects of team diversity on team outcomes: A meta-analytic
  review of team demography.
\newblock {\em Journal of management}, 33(6):987--1015, 2007.

\bibitem{lazarsfeld1954friendship}
P.~F. Lazarsfeld and R.~K. Merton.
\newblock Friendship as a social process: A substantive and methodological
  analysis.
\newblock {\em Freedom and control in modern society}, 18(1):18--66, 1954.

\bibitem{liang2007effect}
T.~Liang, C.~Liu, T.-M. Lin, and B.~Lin.
\newblock Effect of team diversity on software project performance.
\newblock {\em Industrial Management \& Data Systems}, 107(5):636--653, 2007.

\bibitem{luce}
R.~D. Luce.
\newblock Individual choice behavior: A theoretical analysis.
\newblock {\em New York}, 115:191--243, 1959.

\bibitem{mcpherson2001birds}
M.~McPherson, L.~Smith-Lovin, and J.~M. Cook.
\newblock Birds of a feather: Homophily in social networks.
\newblock {\em Annual review of sociology}, pages 415--444, 2001.

\bibitem{singla2015}
A.~Singla, E.~Horvitz, P.~Kohli, and A.~Krause.
\newblock Learning to hire teams.
\newblock {\em arXiv:1508.02823}, 2015.

\bibitem{stahl2010look}
G.~K. Stahl, K.~M{\"a}kel{\"a}, L.~Zander, and M.~L. Maznevski.
\newblock A look at the bright side of multicultural team diversity.
\newblock {\em Scandinavian Journal of Management}, 26(4):439--447, 2010.

\bibitem{svitkina2004min}
Z.~Svitkina and {\'E}.~Tardos.
\newblock Min-max multiway cut.
\newblock In {\em Approximation, Randomization, and Combinatorial Optimization.
  Algorithms and Techniques}, pages 207--218. Springer, 2004.

\bibitem{terborg1976longitudinal}
J.~R. Terborg, C.~Castore, and J.~A. DeNinno.
\newblock A longitudinal field investigation of the impact of group composition
  on group performance and cohesion.
\newblock {\em J. Personality and Social Psychology}, 34(5):782, 1976.

\bibitem{watrigant2014sum}
R.~Watrigant, M.~Bougeret, R.~Giroudeau, and J.~K{\"o}nig.
\newblock On the sum-max graph partitioning problem.
\newblock {\em Theoretical Computer Science}, 540:143--155, 2014.

\bibitem{zhu2010data}
S.~Zhu, D.~Wang, and T.~Li.
\newblock Data clustering with size constraints.
\newblock {\em Knowledge-Based Systems}, 23(8):883--889, 2010.

\end{thebibliography}
%
\end{document}